\newcommand{\E}{\mathbb{E}}
\newcommand{\R}{\mathbb{R}}
\newcommand{\N}{\mathbb{N}}
\newcommand{\indep}{\perp \!\!\! \perp}
\newcommand{\frameworkname}{\textsc{NeuralCSA}\xspace}
\newcommand*\diff{\mathop{}\!\mathrm{d}}
\newcommand{\hammer}{\includegraphics[scale=0.018]{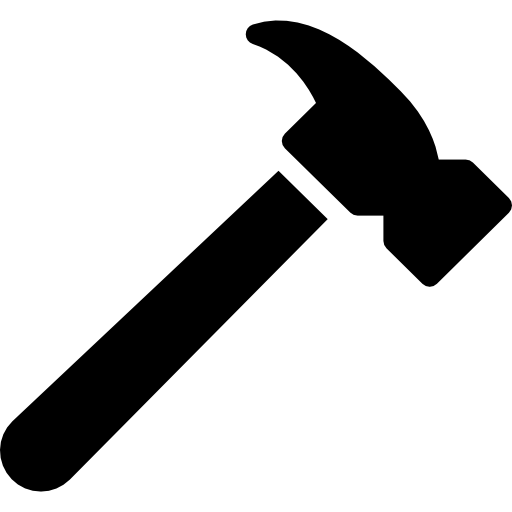}}
\newcommand{\greencheck}{\textcolor{ForestGreen}{\checkmark}}
\newcommand{\redcross}{\textcolor{BrickRed}{\ding{55}}}
\theoremstyle{definition}
\newtheorem{assumption}{Assumption}
\newtheorem{definition}{Definition}
\theoremstyle{plain}
\newtheorem{lemma}{Lemma}
\newtheorem{theorem}{Theorem}
\title{A Neural Framework for Generalized Causal Sensitivity Analysis}
\author{Dennis Frauen\textsuperscript{1, 2, 6}  \And Fergus Imrie\textsuperscript{3} \And Alicia Curth\textsuperscript{4} \And Valentyn Melnychuk\textsuperscript{1, 2}  \And Stefan Feuerriegel\textsuperscript{1, 2} \And Mihaela van der Schaar\textsuperscript{4, 5}}
\begin{document}

\maketitle
\addtocounter{footnote}{1}
\footnotetext{LMU Munich}
\addtocounter{footnote}{1}
\footnotetext{Munich Center for Machine Learning}
\addtocounter{footnote}{1}
\footnotetext{UCLA}
\addtocounter{footnote}{1}
\footnotetext{University of Cambridge}
\addtocounter{footnote}{1}
\footnotetext{Alan Turing Institute}
\addtocounter{footnote}{1}
\footnotetext{Corresponding author (\texttt{frauen@lmu.de})}
\justifying

\begin{abstract}
Unobserved confounding is common in many applications, making causal inference from observational data challenging. As a remedy, \emph{causal sensitivity analysis} is an important tool to draw causal conclusions under unobserved confounding with mathematical guarantees. In this paper, we propose \frameworkname, a neural framework for \emph{generalized} causal sensitivity analysis. Unlike previous work, our framework is compatible with (i)~a large class of sensitivity models, including the marginal sensitivity model, $f$-sensitivity models, and Rosenbaum's sensitivity model; (ii)~different treatment types (i.e., binary and continuous); and (iii)~different causal queries, including (conditional) average treatment effects and simultaneous effects on multiple outcomes. The generality of \frameworkname is achieved by learning a latent distribution shift corresponding to a treatment intervention using two conditional normalizing flows. We provide theoretical guarantees that \frameworkname can infer valid bounds on the causal query of interest and also demonstrate this empirically using both simulated and real-world data.
\end{abstract}

\section{Introduction}

Causal inference from observational data is central to many fields such as medicine \citep{Frauen.2023, Feuerriegel.2024}, economics \citep{Imbens.1994}, or marketing \citep{Varian.2016}. However, the presence of unobserved confounding often renders causal inference challenging \citep{Pearl.2009}. As an example, consider an observational study examining the effect of smoking on lung cancer risk, where potential confounders, such as genetic factors influencing smoking behavior and cancer risk \citep{Erzurumluoglu.2020}, are not observed. Then, the causal relationship is not identifiable, and point identification without additional assumptions is impossible \citep{Pearl.2009}.

Causal sensitivity analysis offers a remedy by moving from \emph{point} identification to \emph{partial} identification. To do so, approaches for causal sensitivity analysis first impose assumptions on the strength of unobserved confounding through so-called sensitivity models \citep{Rosenbaum.1987, Imbens.2003} and then obtain bounds on the causal query of interest. Such bounds often provide insights that the causal quantities can not reasonably be explained away by unobserved confounding, which is sufficient for consequential decision-making in many applications \citep{Kallus.2019}. 


Existing works on causal sensitivity analysis can be loosely grouped by problem settings. These vary across (1)~sensitivity models, such as the marginal sensitivity model (MSM) \citep{Tan.2006}, $f$-sensitivity model \citep{Jin.2022}, and Rosenbaum's sensitivity model \citep{Rosenbaum.1987}; (2)~treatment type (i.e., binary and continuous); and (3)~causal query of interest. Causal queries may include (conditional) average treatment effects (CATE), but also distributional effects or simultaneous effects on multiple outcomes. Existing works typically focus on a specific sensitivity model, treatment type, and causal query (Table~\ref{tab:rw}). However, none is applicable to all settings within (1)--(3).

To fill this gap, we propose \emph{\frameworkname}, a neural framework for causal sensitivity analysis that is applicable to numerous sensitivity models, treatment types, and causal queries, including multiple outcome settings. For this, we define a large class of sensitivity models, which we call \emph{generalized treatment sensitivity models} (GTSMs). GTSMs include common sensitivity models such as the MSM, $f$-sensitivity models, and Rosenbaum's sensitivity model. The intuition behind GTSMs is as follows: when intervening on the treatment $A$, the $U$--$A$ edge is removed in the corresponding causal graph, which leads to a distribution shift in the latent confounders $U$ (see Fig.~\ref{fig:dist_shift_sketch}). GTSMs then impose restrictions on this latent distribution shift, which corresponds to assumptions on the ``strength'' of unobserved confounding.

\begin{wrapfigure}{l}{0.4\textwidth}
  \centering
  \vspace{-0.5cm}
  \includegraphics[width=1\linewidth]{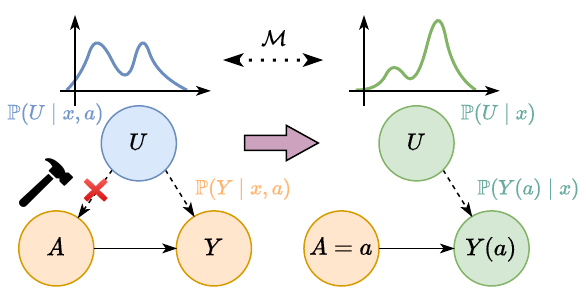}
  \caption{Idea behind \frameworkname to learn the latent distribution shift due to treatment intervention (\hammer). Orange nodes denote observed (random) variables. Blue nodes denote unobserved variables pre-intervention. Green nodes indicate unobserved variables post-intervention under a GTSM $\mathcal{M}$. Observed confounders $X$ are empty for simplicity.}
  \label{fig:dist_shift_sketch}
  \vspace{-0.5cm}
\end{wrapfigure}
\frameworkname is compatible with any sensitivity model that can be written as a GTSM. This is crucial in practical applications, where sensitivity models correspond to different assumptions on the data-generating process and may lead to different results \citep{Yin.2022}.
To achieve this, \frameworkname \emph{learns the latent distribution shift in the unobserved confounders} from Fig.~\ref{fig:dist_shift_sketch} using two separately trained conditional normalizing flows (CNFs). This is different from previous works for causal sensitivity analysis, which do not provide a unified approach across numerous sensitivity models, treatment types, and causal queries. We provide theoretical guarantees that \frameworkname learns valid bounds on the causal query of interest and demonstrate this empirically.

Our \textbf{contributions}\insert\footins{\pdfrunninglinkoff}\footnote{Code is available at \href{https://github.com/DennisFrauen/NeuralCSA}{https://github.com/DennisFrauen/NeuralCSA}. }\insert\footins{\pdfrunninglinkon} are: (1)~We define a general class of sensitivity models, called GTSMs. (2)~We propose \frameworkname, a neural framework for causal sensitivity analysis under any GTSMs. \frameworkname is compatible with various sensitivity models, treatment types, and causal queries. In particular, \frameworkname is applicable in settings for which bounds are not analytically tractable and no solutions exist yet. (3)~We provide theoretical guarantees that \frameworkname learns valid bounds on the causal query of interest and demonstrate the effectiveness of our framework empirically. 


\section{Related work}\label{sec:rw}

In the following, we provide an overview of related literature on partial identification and causal sensitivity analysis. A more detailed overview, including literature on point identification and estimation, can be found in Appendix~\ref{app:rw}.

\textbf{Partial identification:} The aim of partial identification is to compute bounds on causal queries whenever point identification is not possible, such as under unobserved confounding \citep{Manski.1990}. There are several literature streams that impose different assumptions on the data-generating process in order to obtain informative bounds. One stream addresses partial identification for general causal graphs with discrete variables \citep{Duarte.2023}. 
\begin{wraptable}{r}{0.65\textwidth}
\vspace{-0.3cm}
\caption{Overview of key settings for causal sensitivity analyses and whether covered by existing literature (\greencheck) or not (\redcross). Treatments are either binary or continuous. Details are in Appendix~\ref{app:rw}. Our \frameworkname framework is applicable in all settings.
}
\label{tab:rw}
\vspace{-0.3cm}
\resizebox{0.65\textwidth}{!}{
\begin{tabular}{lcccccc}
\noalign{\smallskip} \toprule \noalign{\smallskip}
\backslashbox{Causal query}{Sensitivity model}& \multicolumn{2}{c}{MSM} & \multicolumn{2}{c}{$f$-sensitivity} & \multicolumn{2}{c}{Rosenbaum}\\
\cmidrule(lr){2-3} \cmidrule(lr){4-5} \cmidrule(lr){6-7} & Binary & Cont.\textsuperscript{$^\dagger$} & Binary & Cont. & Binary & Cont.\\
\midrule
CATE & \greencheck &  \greencheck &  \greencheck &  \redcross &  \greencheck & \redcross \\
Distributional effects &  \greencheck &  \greencheck &  \redcross &  \redcross &  \redcross& \redcross \\
Interventional density &  \greencheck&\greencheck & (\greencheck) & \redcross &  \redcross & \redcross \\
Multiple outcomes &  \redcross &  \redcross &  \redcross &  \redcross &  \redcross & \redcross \\
\bottomrule
\multicolumn{7}{p{0.7\textwidth}}{\scriptsize $^\dagger$ The MSM for continuous treatment is also called continuous MSM (CMSM) \citep{Jesson.2022}.}\\
\end{tabular}}
\vspace{-0.4cm}
\end{wraptable}
Another stream assumes the existence of valid instrumental variables \citep{Gunsilius.2020, Kilbertus.2020}. Recently, there has been a growing interest in using neural networks for partial identification \citep{Xia.2021, Xia.2023, Padh.2023}. However, none of these methods allow for incorporating sensitivity models and sensitivity analysis.

\textbf{Causal sensitivity analysis:} Causal sensitivity analysis addresses the partial identification of causal queries by imposing assumptions on the strength of unobserved confounding via sensitivity models. It dates back to \citet{Cornfield.1959}, who showed that unobserved confounding could not reasonably explain away the observed effect of smoking on lung cancer risk. 

Existing works can be grouped along three dimensions: (1)~the sensitivity model, (2)~the treatment type, and (3)~the causal query of interest (see Table~\ref{tab:rw}; details in Appendix~\ref{app:rw}). Popular sensitivity models include Rosenbaum's sensitivity model \citep{Rosenbaum.1987}, the marginal sensitivity model (MSM) \citep{Tan.2006}, and $f$-sensitivity models \citep{Jin.2022}. Here, most methods have been proposed for binary treatments and conditional average treatment effects \citep{Kallus.2019, Zhao.2019, Jesson.2021, Dorn.2022, Dorn.2022b, Oprescu.2023}. Extensions under the MSM have been proposed for continuous treatments \citep{Jesson.2022, Marmarelis.2023} and individual treatment effects \citep{Yin.2022, Jin.2023, Marmarelis.2023b}. However, approaches for many settings are still missing (shown by \redcross\xspace in Table~\ref{tab:rw}). In an attempt to generalize causal sensitivity analysis, \citet{Frauen.2023c} provided bounds for different treatment types (i.e., binary, continuous) and causal queries (e.g., CATE, distributional effects but not multiple outcomes). Yet, the results are limited to MSM-type sensitivity models. 


To the best of our knowledge, no previous work proposes a unified solution for obtaining bounds under various sensitivity models (e.g., MSM, $f$-sensitivity, Rosenbaum's), treatment types (i.e., binary and continuous), and causal queries (e.g., CATE, distributional effects, interventional densities, and simultaneous effects on multiple outcomes). 


\section{Mathematical background}


\textbf{Notation:} We denote random variables $X$ as capital letters and their realizations $x$ in lowercase. We further write $\mathbb{P}(x)$ for the probability mass function if $X$ is discrete, and for the probability density function with respect to the Lebesque measure if $X$ is continuous. Conditional probability mass functions/ densities $\mathbb{P}(Y = y \mid X = x)$ are written as $\mathbb{P}(y \mid x)$. Finally, we denote the conditional distribution of $Y \mid X = x$ as $\mathbb{P}(Y \mid x)$ and its expectation as $\mathbb{E}[Y \mid x]$.

\subsection{Problem setup}

\textbf{Data generating process:} We consider the standard setting for (static) treatment effect estimation under unobserved confounding \citep{Dorn.2022}. That is, we have observed confounders $X \in \mathcal{X} \subseteq \R^{d_x}$, unobserved confounders $U \in \mathcal{U} \subseteq \R^{d_u}$, treatments $A \in \mathcal{A} \subseteq \R^{d_a}$, and outcomes $Y \in \mathcal{Y} \subseteq \R^{d_y}$. Note that we allow for (multiple) discrete or continuous treatments and multiple outcomes, i.e., $d_a, d_y \geq 1$. The underlying causal graph is shown in Fig.~\ref{fig:causal_graph}. We have access to an observational dataset $\mathcal{D} = (x_i, a_i, y_i)_{i=1}^n$ sampled i.i.d. from the observational distribution $(X, A, Y) \sim \mathbb{P}_\mathrm{obs}$. The full distribution $(X, U, A, Y) \sim \mathbb{P}$ is unknown.

\begin{wrapfigure}{r}{0.20\textwidth}
\vspace{-0.8cm}
\begin{center}
\includegraphics[width=0.20\textwidth]{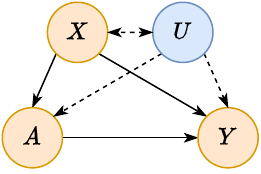}
\end{center}\vspace{-0.4cm}
\caption{Causal graph. Observed variables are colored orange and unobserved blue. We allow for arbitrary dependence between $X$ and $U$.}
\label{fig:causal_graph}
\vspace{-0.9cm}
\end{wrapfigure}

We use the potential outcomes framework to formalize the causal inference problem \citep{Rubin.1974} and denote $Y(a)$ as the potential outcome when intervening on the treatment and setting it to $A = a$. We impose the following standard assumptions \citep{Dorn.2022}.
\begin{assumption}\label{ass:causal}
\emph{We assume that for all $x \in \mathcal{X}$ and $a \in \mathcal{A}$ the following three conditions hold: (i)~$A=a$ implies $Y(a) = Y$(consistency); (ii)~$\mathbb{P}(a \mid x) > 0$ (positivity); and (iii)~$Y(a) \indep A \mid X, U$ (latent unconfoundedness).}
\end{assumption}

\textbf{Causal queries:} We are interested in a wide range of general causal queries. We formalize them as functionals $Q(x, a, \mathbb{P}) = \mathcal{F}(\mathbb{P}(Y(a) \mid x))$, where $\mathcal{F}$ is a functional that maps the potential outcome distribution $\mathbb{P}(Y(a) \mid x)$ to a real number \citep{Frauen.2023c}. Thereby, we cover various queries from the causal inference literature. For example, by setting $\mathcal{F} = \mathbb{E}[\cdot]$, we obtain the conditional expected potential outcomes/ dose-response curves $Q(x, a, \mathbb{P}) = \mathbb{E}[Y(a) \mid x]$. We can also obtain distributional versions of these queries by setting $\mathcal{F}$ to a quantile instead of the expectation. Furthermore, our methodology will also apply to queries that can be obtained by averaging or taking differences. For binary treatments $A \in \{0,1\}$, the query $\tau(x) = \mathbb{E}[Y(1) \mid x] - \mathbb{E}[Y(0) \mid x]$ is called the conditional average treatment effect (CATE), and its averaged version $\int \tau(x) \mathbb{P}(x) \diff x$ the average treatment effect (ATE).

Our formalization also covers simultaneous effects on multiple outcomes (i.e., $d_y \geq 2$). Consider query $Q(x, a, \mathbb{P}) = \mathbb{P}(Y(a) \in \mathcal{S} \mid x)$, which is the probability that the outcome $Y(a)$ is contained in some set $\mathcal{S} \subseteq \mathcal{Y}$ after intervening on the treatment. For example, consider two potential outcomes $Y_1(a)$ and $Y_2(a)$ denoting blood pressure and heart rate, respectively. We then might be interested in $\mathbb{P}(Y_1(a) \leq t_1, Y_2(a) \leq t_2 \mid x)$, where $t_1$ and $t_2$ are critical threshold values (see Sec.~\ref{sec:experiments}).


\subsection{Causal sensitivity analysis}

Causal sensitivity analysis builds upon sensitivity models that restrict the possible strength of unobserved confounding \citep[e.g.,][]{Rosenbaum.1983}. Formally, we define a sensitivity model as a family of distributions of $(X, U, A, Y)$ that induce the observational distribution $\mathbb{P}_\mathrm{obs}$.

\begin{definition}
\emph{A sensitivity model $\mathcal{M}$ is a family of probability distributions $\mathbb{P}$ defined on $\mathcal{X} \times \mathcal{U} \times \mathcal{A} \times \mathcal{Y}$ for arbitrary finite-dimensional $\mathcal{U}$ so that $\int_\mathcal{U} \mathbb{P}(x, u, a, y) \diff u = \mathbb{P}_\mathrm{obs}(x, a, y)$ for all $\mathbb{P} \in \mathcal{M}$.}
\end{definition}

\textbf{Task:} Given a sensitivity model $\mathcal{M}$ and an observational distribution $\mathbb{P}_\mathrm{obs}$, the aim of causal sensitivity analysis is to solve the partial identification problem
\begin{equation}\label{eq:partial_identification}
    Q^+_\mathcal{M}(x, a) =  \sup_{\mathbb{P} \in \mathcal{M}} \, Q(x, a, \mathbb{P}) \quad \text{and} \quad Q^-_\mathcal{M}(x, a) =  \inf_{\mathbb{P} \in \mathcal{M}} \, Q(x, a, \mathbb{P}).
\end{equation}
By its definition, the interval $[Q^-_\mathcal{M}(x, a), Q^+_\mathcal{M}(x, a)]$ is the tightest interval that is guaranteed to contain the ground-truth causal query $Q(x, a, \mathbb{P})$ while satisfying the sensitivity constraints. We can also obtain bounds for averaged causal queries and differences via $\int Q^+_\mathcal{M}(x, a) \mathbb{P}(x) \diff x$ and $Q^+_\mathcal{M}(x, a_1) - Q^-_\mathcal{M}(x, a_2)$ (see Appendix~\ref{app:avg_diff} for details). 

\textbf{Sensitivity models from the literature:} We now recap three types of prominent sensitivity models from the literature, namely, the MSM, $f$-sensitivity models, and Rosenbaum's sensitivity model. These are designed for binary treatments $A \in \{0, 1\}$. To formalize them, we first define the odds ratio $\mathrm{OR}(a, b) =  \frac{a}{(1 - a)} \frac{(1 - b)}{b}$, the observed propensity score $\pi(x) = \mathbb{P}(A=1 \mid x)$, and the full propensity score $\pi(x, u) = \mathbb{P}(A=1 \mid x, u)$.\footnote{Corresponding sensitivity models for continuous treatments can be defined by replacing the odds ratio with the density ratio $\mathrm{DR}(a, b) = \frac{a}{b}$ and the propensity scores with the densities $\mathbb{P}(a \mid x)$ and $\mathbb{P}(a \mid x, u)$ \citep{Bonvini.2022, Jesson.2022}. We refer to Appendix~\ref{app:sensitivity} for details and further examples of sensitivity models.} Then, the definitions are:
\begin{enumerate}[leftmargin=7.5mm]
    \item The \emph{marginal sensitivity model} (MSM) \citep{Tan.2006} is defined as the family of all $\mathbb{P}$ that satisfy
$\frac{1}{\Gamma} \leq \mathrm{OR}(\pi(x), \pi(x,u)) \leq \Gamma$ for all $x \in \mathcal{X}$ and $u \in \mathcal{U}$ and a sensitivity parameter $\Gamma \geq 1$.

\item \emph{$f$-sensitivity models} \citep{Jin.2022} build upon a given a convex function $f \colon \R_{>0} \to \R$ with $f(1) = 0$ and are defined via 
    $\max\big\{ \int_{\mathcal{U}} f\left(\mathrm{OR}(\pi(x), \pi(x,u))\right) \mathbb{P}(u \mid x, A=1) \diff u, \,  \int_{\mathcal{U}} f\left(\mathrm{OR}^{-1}(\pi(x), \pi(x,u))\right) \mathbb{P}(u \mid x, A=1) \diff u\big\} \leq \Gamma$
for all $x \in \mathcal{X}$.

\item \emph{Rosenbaum's sensitivity model} \citep{Rosenbaum.1987} is defined via $\frac{1}{\Gamma} \leq \mathrm{OR}(\pi(x, u_1), \pi(x,u_2)) \leq \Gamma$
for all $x \in \mathcal{X}$ and $u_1, u_2 \in \mathcal{U}$.
\end{enumerate}

\textbf{Interpretation and choice of $\Gamma$:} In the above sensitivity models, the sensitivity parameter $\Gamma$ controls the strength of unobserved confounding. Both MSM and Rosenbaum's sensitivity model bound on odds-ratio uniformly over all $u \in \mathcal{U}$, while the $f$-sensitivity model bounds an integral over $u$. We refer to Appendix~\ref{app:sensitivity} for further differences. Setting $\Gamma = 1$ in the above sensitivity models corresponds to unconfoundedness and thus point identification. For $\Gamma > 1$, point identification is not possible, and we need to solve the partial identification problem from Eq.~\eqref{eq:partial_identification} instead. 

In practice, one typically chooses $\Gamma$ by domain knowledge or data-driven heuristics \citep{Kallus.2019, Hatt.2022b}. For example, a common approach in practice is to determine the smallest $\Gamma$ so that the partially identified interval $[Q^-_\Gamma(x, a), Q^+_\Gamma(x, a)]$ includes $0$. Then, $\Gamma$ can be interpreted as a level of ``causal uncertainty'', quantifying the smallest violation of unconfoundedness that would explain away the causal effect \citep{Jesson.2021, Jin.2023}.

\section{The generalized treatment sensitivity model (GTSM)}\label{sec:general_sensitivity}

We now define our generalized treatment sensitivity model (GTSM). The GTSM subsumes a large class of sensitivity models and includes MSM, $f$-sensitivity, and Rosenbaum's sensitivity model).

\textbf{Motivation:} Intuitively, we define the GTSM so that it includes all sensitivity models that restrict the latent distribution shift in the confounding space due to the treatment intervention (see Fig.~\ref{fig:dist_shift_sketch}). To formalize this, we can write the observational outcome density under Assumption~\ref{ass:causal} as
\begin{equation}\label{eq:obs_dist}
    \mathbb{P}_\mathrm{obs}(y \mid x, a) = \int \mathbb{P}(y \mid x, u, a) \color{red} \, \mathbb{P}(u \mid x, a) \color{black} \diff u.
\end{equation}
When intervening on the treatment, we remove the $U$--$A$ edge in the corresponding causal graph (Fig.~\ref{fig:dist_shift_sketch}) and thus artificially remove dependence between $U$ and $A$. Formally, we can write the potential outcome density under Assumption~\ref{ass:causal} as
\begin{equation}\label{eq:int_dist}
    \mathbb{P}(Y(a) = y \mid x) = \int \mathbb{P}(Y(a) = y \mid x, u) \mathbb{P}(u \mid x) \diff u = \int \mathbb{P}(y \mid x, u, a) \color{orange} \mathbb{P}(u \mid x) \color{black} \diff u.
\end{equation}
Eq.~\eqref{eq:obs_dist} and~\eqref{eq:int_dist} imply that $\mathbb{P}_\mathrm{obs}(y \mid x, a)$ and $\mathbb{P}(Y(a) = y \mid x)$ only differ by the densities $\color{red} \mathbb{P}(u \mid x, a)$ and $\color{orange} \mathbb{P}(u \mid x)$ under the integrals (colored red and orange). If the distributions $\color{red}\mathbb{P}(U \mid x, a)$ and $\color{orange}\mathbb{P}(U \mid x)$ would coincide, it would hold that $\mathbb{P}(Y(a) = y \mid x) = \mathbb{P}_\mathrm{obs}(y \mid x, a)$ and the potential outcome distribution would be identified. This suggests that we should define sensitivity models by measuring deviations from unconfoundedness via the shift between  $\color{red}\mathbb{P}(U \mid x, a)$ and $\color{orange}\mathbb{P}(U \mid x)$.

\begin{definition}
\emph{A generalized treatment sensitivity model (GTSM) is a sensitivity model $\mathcal{M}$ that contains all probability distributions $\mathbb{P}$ that satisfy $\mathcal{D}_{x, a}\left(\mathbb{P}(U \mid x), \mathbb{P}(U \mid x, a)\right) \leq \Gamma$ for a functional of distributions $\mathcal{D}_{x, a}$, a sensitivity parameter $\Gamma \in \R_{\geq 0}$, and all $x \in \mathcal{X}$ and $a \in \mathcal{A}$.}
\end{definition}
\begin{lemma}\label{lem:gtsm}
The MSM, the $f$-sensitivity model, and Rosenbaum's sensitivity model are GTSMs.
\end{lemma}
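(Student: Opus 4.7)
The plan is to produce, for each of the three named sensitivity models, an explicit functional $\mathcal{D}_{x,a}$ of the pair of conditional distributions $(\mathbb{P}(U \mid x), \mathbb{P}(U \mid x, a))$ such that the defining inequality of the model is equivalent to $\mathcal{D}_{x,a}(\mathbb{P}(U \mid x), \mathbb{P}(U \mid x, a)) \le \Gamma$ uniformly in $x$ and $a$. Each of the three constraints is stated in terms of the full propensity $\pi(x,u) = \mathbb{P}(A=1 \mid x, u)$, so the entire argument reduces to expressing $\pi(x,u)$ through the pair of densities in the GTSM definition.

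The unifying tool is Bayes' rule for the latent confounder. For binary $A$,
\begin{equation*}
\frac{\mathbb{P}(u \mid x, 1)}{\mathbb{P}(u \mid x)} = \frac{\pi(x, u)}{\pi(x)}, \qquad \frac{\mathbb{P}(u \mid x, 0)}{\mathbb{P}(u \mid x)} = \frac{1 - \pi(x, u)}{1 - \pi(x)}.
\end{equation*}
Since $\pi(x)$ is a functional of the observational distribution and can therefore be treated as a known scalar inside $\mathcal{D}_{x,a}$, these identities let me rewrite $\pi(x,u)$ and $1 - \pi(x,u)$ entirely through the density ratio $r(u) = \mathbb{P}(u \mid x, a)/\mathbb{P}(u \mid x)$, converting any constraint on the full propensity into a constraint on the pair $(\mathbb{P}(U \mid x), \mathbb{P}(U \mid x, a))$.

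For the MSM I would substitute into $\mathrm{OR}(\pi(x), \pi(x,u))$ and define $\mathcal{D}_{x,a}$ as the supremum over $u$ of this expression together with its reciprocal; the MSM inequality then reads $\mathcal{D}_{x,a} \le \Gamma$ by construction. For the $f$-sensitivity model the same substitution turns the integrand $f(\mathrm{OR}(\pi(x), \pi(x,u)))$ into a function of $r(u)$ weighted by $\mathbb{P}(u \mid x, 1)$, which is exactly the second argument of $\mathcal{D}_{x,1}$; the constraint becomes an explicit integral functional of the pair, with the inverse odds ratio handled symmetrically to match the $\max$ in the definition.

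The Rosenbaum case is where I expect the real bookkeeping. Its inequality involves $\pi(x,u_1)$ and $\pi(x,u_2)$ at two distinct points, and the terms $1 - \pi(x,u_j)$ naively require the third density $\mathbb{P}(u \mid x, 0)$, which is not one of the two arguments of $\mathcal{D}_{x,1}$. I would bypass this by using the mixture identity $\mathbb{P}(u \mid x) = \pi(x)\mathbb{P}(u \mid x, 1) + (1 - \pi(x))\mathbb{P}(u \mid x, 0)$ to eliminate $\mathbb{P}(u \mid x, 0)$, yielding $1 - \pi(x,u_j)$ as a rational expression in $r(u_j)$ and $\pi(x)$ only. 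The Rosenbaum constraint then becomes a supremum over $(u_1, u_2)$ of an explicit rational expression and defines the required $\mathcal{D}_{x,1}$, with $\mathcal{D}_{x,0}$ constructed symmetrically. No step is technically deep; the main obstacle is precisely to avoid smuggling in a latent third distribution during the algebraic reduction, and the mixture identity is what cleanly prevents this.
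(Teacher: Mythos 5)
Your plan matches the paper's proof in all essentials: both rest on the Bayes identity $\mathbb{P}(u \mid x, a) = \mathbb{P}(a \mid x, u)\,\mathbb{P}(u \mid x)/\mathbb{P}(a \mid x)$, rewrite each model's propensity-ratio constraint as an explicit functional $\mathcal{D}_{x,a}$ of the pair $\bigl(\mathbb{P}(U \mid x), \mathbb{P}(U \mid x, a)\bigr)$ together with the observational $\mathbb{P}(a \mid x)$, and close with a supremum (or $f$-weighted integral) over $u$. The one small over-complication is your concern about a ``third density'' for Rosenbaum's model: once the Bayes relation at $a=1$ gives $\pi(x,u_j) = \pi(x)\,r(u_j)$, the complement $1 - \pi(x,u_j) = 1 - \pi(x)\,r(u_j)$ is immediate without invoking the mixture identity, and this is exactly what the paper's explicit $\rho(x,u_1,u_2,a)$ encodes.
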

The class of all GTSMs is still too large for meaningful sensitivity analysis. This is because the sensitivity constraint may not be invariant w.r.t. transformations (e.g., scaling) of the latent space $\mathcal{U}$.
\begin{definition}[Transformation-invariance]\label{def:transformation}
\emph{A GTSM $\mathcal{M}$ is transformation-invariant if it satisfies $\mathcal{D}_{x, a}(\mathbb{P}(U \mid x), \mathbb{P}(U \mid x, a)) \geq \mathcal{D}_{x, a}(\mathbb{P}(t(U) \mid x), \mathbb{P}(t(U) \mid x, a))$ for any measurable function $t \colon \mathcal{U} \to \widetilde{\mathcal{U}}$ to another latent space $\widetilde{\mathcal{U}}$.}
\end{definition}

Transformation-invariance is necessary for meaningful sensitivity analysis because it implies that once we choose a latent space $\mathcal{U}$ and a sensitivity parameter $\Gamma$, we cannot find a transformation to another latent space $\widetilde{\mathcal{U}}$ so that the induced distribution on $\widetilde{\mathcal{U}}$ violates the sensitivity constraint. 
All sensitivity models we consider in this paper are transformation-invariant, as stated below.

\begin{lemma}\label{lem:transformation}
The MSM, $f$-sensitivity models, and Rosenbaum's sensitivity model are transformation-invariant.
\end{lemma}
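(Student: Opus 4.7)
The plan is to show, for each of the three sensitivity models, that its associated functional $\mathcal{D}_{x,a}$ is non-increasing under measurable transformations of the latent variable $U$. The unifying tool is the following identity: for any measurable $t : \mathcal{U} \to \widetilde{\mathcal{U}}$, writing $L(u) = \mathbb{P}(u \mid x, a)/\mathbb{P}(u \mid x)$, the pushforward density ratio satisfies
\begin{equation*}
    \frac{\mathbb{P}(t(U) = v \mid x, a)}{\mathbb{P}(t(U) = v \mid x)} \;=\; \mathbb{E}\!\left[L(U) \,\big|\, t(U) = v,\, X = x\right],
\end{equation*}
where the conditional expectation is computed under $\mathbb{P}(U \mid x)$. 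This is the standard statement that pushforwards transform Radon–Nikodym derivatives into conditional expectations, and it lets me reduce each case to a Jensen-type contraction.

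For the \emph{MSM}, I would first use Bayes' rule together with the representation established in the proof of Lemma~\ref{lem:gtsm} to rewrite the constraint $\frac{1}{\Gamma} \leq \mathrm{OR}(\pi(x), \pi(x,u)) \leq \Gamma$ as a two-sided essential-sup/inf bound on a monotone function of $L(u)$. Because the conditional expectation of a $[c,d]$-valued random variable is again $[c,d]$-valued, essential supremum and infimum are monotone under pushforwards, so the transformed constraint is at least as tight.

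For the \emph{$f$-sensitivity model}, the functional is, by the proof of Lemma~\ref{lem:gtsm}, an $f$-divergence (and its mirror) between $\mathbb{P}(U \mid x, A{=}1)$ and a reference distribution built from $\pi(x)$ and $\pi(x,u)$, which can be rewritten purely in terms of $\mathbb{P}(U \mid x)$ and $\mathbb{P}(U \mid x, a)$. Transformation-invariance then follows immediately from the classical \emph{data processing inequality} for $f$-divergences applied to the deterministic Markov kernel $u \mapsto t(u)$.

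For \emph{Rosenbaum's model}, I would rewrite the constraint $\frac{1}{\Gamma} \leq \mathrm{OR}(\pi(x,u_1), \pi(x,u_2)) \leq \Gamma$ as the bound $\sup_{u_1,u_2} g(\pi(x,u_1))/g(\pi(x,u_2)) \leq \Gamma$ with $g(p)=p/(1-p)$, and then observe that the transformed propensity $\pi(x, v) = \mathbb{E}[\pi(X,U) \mid t(U)=v, X=x]$ lies in the convex hull of $\{\pi(x,u)\}$. Since $g$ is strictly increasing, the range $[\inf_v g(\pi(x,v)), \sup_v g(\pi(x,v))]$ is contained in the original range, yielding the desired inequality. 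The main obstacle I anticipate is purely technical: making the pushforward/conditional-expectation identity rigorous on general measurable spaces (handling null sets on which the density ratio is undefined, and the fact that $t$ need not be injective or have a well-defined inverse image density). For $f$-sensitivity the DPI can just be cited; for MSM and Rosenbaum, the substance is the elementary fact that conditional expectations are $L^\infty$-contractions, but this needs to be carefully tied back to the original odds-ratio formulation of each constraint.
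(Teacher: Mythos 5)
Your proposal is correct, and the underlying mathematics is the same as in the paper's proof, but you state the key step at a more conceptual level, which is worth comparing. For the $f$-sensitivity models you and the paper argue identically, both invoking the data-processing inequality for $f$-divergences. For the MSM and for Rosenbaum's model the paper proceeds by explicit manipulation: it rearranges the sensitivity constraint into two-sided bounds on $\mathbb{P}(u \mid x)$ in terms of $\mathbb{P}(u \mid x, a)$, substitutes these into the push-forward densities written as $\int \delta(t(u)-t(u'))\,\mathbb{P}(u' \mid \cdot)\,\mathrm{d}u'$, and bounds the resulting ratios. You instead identify the push-forward density ratio (resp.\ the transformed propensity $\pi(x,t(u))$) as a conditional expectation of the original density ratio (resp.\ of $\pi(x,U)$) under $\mathbb{P}(U\mid x)$, and then invoke the elementary facts that conditional expectations are $L^\infty$-contractions and preserve ranges; this is exactly what the paper's delta-integral inequalities implement concretely, so the two proofs are the same argument in different notation. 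Your phrasing of the Rosenbaum case is arguably cleaner: the paper's route passes through an intermediate bound of the form $\rho(x, t(u_1), t(u_2), a) \leq \Gamma^2 / \rho(x, u_1, u_2, a)$ and then closes by a symmetric argument, whereas your observation that $\pi(x,t(u))$ lies in the convex hull of $\{\pi(x,u)\}$ and that $g(p)=p/(1-p)$ is increasing gives the two-sided odds-ratio bound directly. The measure-theoretic caveats you flag (null sets, non-injective $t$, densities on general spaces) are real but apply equally to the paper's delta-integral formulation, which also works informally at the level of densities, so they do not constitute a gap relative to the paper's own standard of rigor.
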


\section{Neural causal sensitivity analysis}

We now introduce our neural approach to causal sensitivity analysis as follows. First, we simplify the partial identification problem from Eq.~\eqref{eq:partial_identification} under a GTSM and propose a (model-agnostic) two-stage procedure (Sec.~\ref{sec:two_stage}). Then, we provide theoretical guarantees for our two-stage procedure (Sec.~\ref{sec:theoretical_guaranteess}). Finally, we instantiate our neural framework called \frameworkname (Sec.~\ref{sec:neural}).

\subsection{Sensitivity analysis under a GTSM}\label{sec:two_stage}

\textbf{Motivation:} Recall that, by definition, a GTSM imposes constraints on the distribution shift in the latent confounders due to treatment intervention (Fig.~\ref{fig:dist_shift_sketch}). Our idea is to propose a two-stage procedure, where Stage~1 learns the observational distribution (Fig.~\ref{fig:dist_shift_sketch}, left), while Stage~2 learns the shifted distribution of $U$ after intervening on the treatment under a GTSM (Fig.~\ref{fig:dist_shift_sketch}, right). In Sec.~\ref{sec:theoretical_guaranteess}, we will see that, under weak assumptions, learning this distribution shift in separate stages is guaranteed to lead to the bounds $Q^+_\mathcal{M}(x, a)$ and $Q^-_\mathcal{M}(x, a)$. To formalize this, we start by simplifying the partial identification problem from Eq.~\eqref{eq:partial_identification} for a GTSM $\mathcal{M}$.

\textbf{Simplifying Eq.~\eqref{eq:partial_identification}:} We begin by rewriting Eq.~\eqref{eq:partial_identification} using the GTSM definition. Without loss of generality, we consider the upper bound $Q^+_\mathcal{M}(x, a)$. Recall that Eq.~\eqref{eq:partial_identification} seeks to maximize over all probability distributions that are compatible both with the observational data and with the sensitivity model. However, note that any GTSM only restricts the $U$--$A$ part of the distribution, not the $U$--$Y$ part. Hence, we can use Eq.~\eqref{eq:int_dist} and Eq.~\eqref{eq:obs_dist} to write the upper bound as
\begin{equation}\label{eq:alternating_opt}
Q^+_\mathcal{M}(x, a) = \hspace{-0.8cm} 
    \sup_{\color{black}\substack{\left\{\mathbb{P}(U \mid x, a^\prime)\right\}_{a^\prime \neq a} \color{black} \\ \text{s.t. }\mathcal{D}_{x, a}(\color{red}\mathbb{P}(U \mid x) \color{black}, \mathbb{P}(U \mid x, a)) \leq \Gamma \\ \text{and } \color{red}\mathbb{P}(u \mid x) \color{black}  = \int \mathbb{P}(u \mid x, a) \color{black} \mathbb{P}_\mathrm{obs}(a \mid x) \diff a}} 
    \hspace{-0.5cm}
    \color{black}\sup_{\substack{\mathbb{P}(U \mid x, a), \; \color{blue} \left\{\mathbb{P}(Y \mid x, u, a)\right\}_{u \in \mathcal{U}} \color{black} \\ \text{s.t. Eq.~\eqref{eq:obs_dist} holds}}}
    \hspace{-0.2cm}
    \mathcal{F} \left(\int \color{blue} \mathbb{P}(Y \mid x, u, a) \color{red} \mathbb{P}(u \mid x) \color{black} \diff u\right),
\end{equation}
where we maximize over (families of) probability distributions $\left\{\mathbb{P}(U \mid x, a^\prime)\right\}_{a^\prime \neq a}$ (left supremum), and $\mathbb{P}(U \mid x, a),$ $\left\{\mathbb{P}(Y \mid x, u, a)\right\}_{u \in \mathcal{U}}$ (right supremum). The coloring indicates the components that appear in the causal query/objective. The constraint in the right supremum ensures that the respective components of the full distribution $\mathbb{P}$ are compatible with the observational data, while the constraints in the left supremum ensure that the respective components are compatible with both observational data and the sensitivity model.

\begin{wrapfigure}{r}{0.48\textwidth}
 \vspace{-1.1cm}
 \centering
 \begin{center}
 \includegraphics[width=1\linewidth]{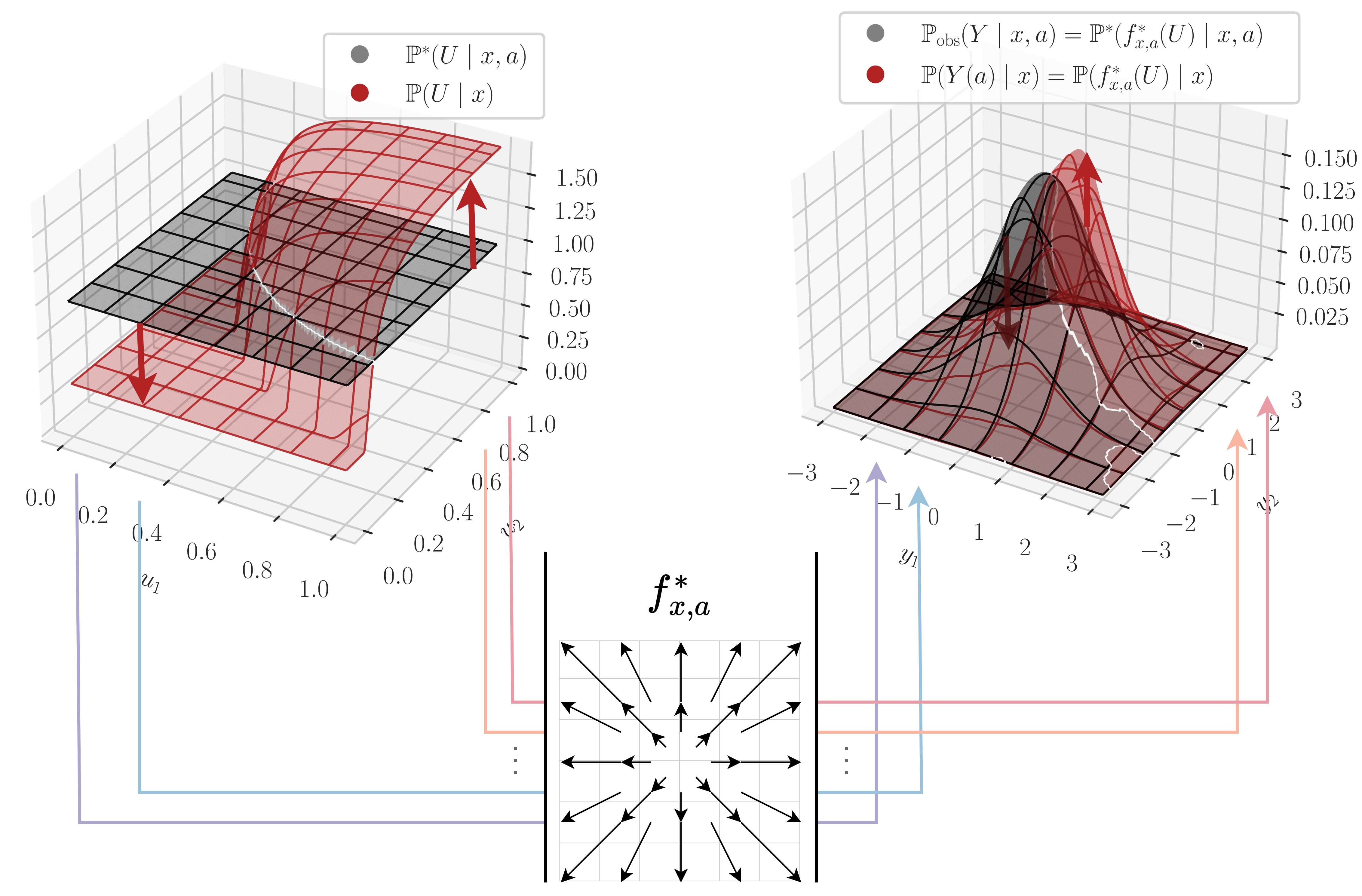}
 \end{center}
 \vspace{-0.4cm}
\caption{Overview of the two-stage procedure.}
\label{fig:neural-csa-scheme}
 \vspace{-0.6cm}
\end{wrapfigure}

The partial identification problem from Eq.~\eqref{eq:alternating_opt} is still hard to solve as it involves two nested constrained optimization problems. However, we can further simplify Eq.~\eqref{eq:alternating_opt}: We will show in Sec.~\ref{sec:theoretical_guaranteess} that we can replace the right supremum with fixed distributions $\mathbb{P}^\ast(U \mid x, a)$ and $\mathbb{P}^\ast(Y \mid x, a, u)$ for all $u \in \mathcal{U} \subseteq \R^{d_y}$ so that Eq.~\eqref{eq:obs_dist} holds. Then, Eq.~\eqref{eq:alternating_opt} reduces to a single constrained optimization problem (left supremum). Moreover, we will show that we can choose $\mathbb{P}^\ast(Y \mid x, a, u) = \delta(Y - f^\ast_{x, a}(u))$ as a delta-distribution induced by an invertible function $f^\ast_{x, a} \colon \mathcal{U} \to \mathcal{Y}$. The constraint in Eq.~\eqref{eq:obs_dist} that ensures compatibility with the observational data then reduces to $\mathbb{P}_\mathrm{obs}(Y \mid x, a) =  \mathbb{P}^\ast(f^\ast_{x, a}(U) \mid x, a)$. This motivates the following two-stage procedure (see Fig.~\ref{fig:neural-csa-scheme}).

\textbf{Two-stage procedure:} In \textbf{Stage~1}, we fix $\mathbb{P}^\ast(U \mid x, a)$ and fix an invertible function $f^\ast_{x, a} \colon \mathcal{U} \to \mathcal{Y}$ so that $\mathbb{P}_\mathrm{obs}(Y \mid x, a) =  \mathbb{P}^\ast(f^\ast_{x, a}(U) \mid x, a)$ holds. That is, the induced push-forward distribution of $\mathbb{P}^\ast(U \mid x, a)$ under $f^\ast_{x, a}$ must coincide with the observational distribution $\mathbb{P}_\mathrm{obs}(Y \mid x, a)$. The existence of such a function is always guaranteed \citep{Chen.2000}. In \textbf{Stage~2}, we then set $\mathbb{P}(U \mid x, a) = \mathbb{P}^\ast(U \mid x, a)$ and $\color{blue}\mathbb{P}(Y \mid x, a, u) \color{black}= \mathbb{P}^\ast(Y \mid x, a, u)$ in Eq.~\eqref{eq:alternating_opt} and only optimize over the left supremum. That is, we write stage 2 for discrete treatments as
\begin{equation}
 \sup_{\color{black}\substack{\mathbb{P}(u \mid x, A \neq a) \\ \text{s.t. } \color{red}\mathbb{P}(u \mid x) \color{black}= \mathbb{P}^\ast(u \mid x, a) \mathbb{P}_\mathrm{obs}(a \mid x) + \mathbb{P}(u \mid x, A \neq a) \color{black} (1 - \mathbb{P}_\mathrm{obs}(a \mid x)) \\ \text{and }\mathcal{D}_{x, a}( \color{red}\mathbb{P}(U \mid x) \color{black}, \mathbb{P}^\ast(U \mid x, a)) \leq \Gamma }}\color{black}\mathcal{F} \left(\color{red}\mathbb{P}\color{black}(f^\ast_{x, a}(\color{red}U\color{black}) \color{red} \mid x) \color{black} \right),
\end{equation}
where we maximize over the distribution $\mathbb{P}(u \mid x, A \neq a)$ for a fixed treatment intervention $a$. For continuous treatments, we can directly take the supremum over $\color{red}\mathbb{P}(u \mid x)$.

\subsection{Theoretical guarantees}
\label{sec:theoretical_guaranteess}

We now provide a formal result that our two-stage procedure returns valid solutions to the partial identification problem from Eq.~\eqref{eq:alternating_opt}. The following theorem states that Stage~2 of our procedure is able to attain the optimal upper bound $Q^+_\mathcal{M}(x, a)$ from Eq.~\eqref{eq:alternating_opt}, even after fixing the distributions $\mathbb{P}^\ast(U \mid x, a)$ and $\mathbb{P}^\ast(Y \mid x, a, u)$ as done in Stage~1. A proof is provided in Appendix~\ref{app:proofs}.

\begin{theorem}[Sufficiency of two-stage procedure]\label{thrm:main}
Let $\mathcal{M}$ be a transformation-invariant GTSM. For fixed $x \in \mathcal{X}$ and $a \in \mathcal{A}$, let $\mathbb{P}^\ast(U \mid x, a)$ be a fixed distribution on $\mathcal{U} = \R^{d_y}$ and $f^\ast_{x, a} \colon \mathcal{U} \to \mathcal{Y}$ a fixed invertible function so that $\mathbb{P}_\mathrm{obs}(Y \mid x, a) =  \mathbb{P}^\ast(f^\ast_{x, a}(U) \mid x, a)$. Let $\mathcal{P}^\ast$ denote the space of all full probability distributions $\mathbb{P}^\ast$ that induce $\mathbb{P}^\ast(U \mid x, a)$ and $\mathbb{P}^\ast(Y \mid x, u, a) = \delta(Y - f^\ast_{x, a}(u))$ and that satisfy $\mathbb{P}^\ast \in \mathcal{M}$. Then, under Assumption~\ref{ass:causal}, it holds that $Q^+_\mathcal{M}(x, a) = \sup_{\mathbb{P}^\ast \in \mathcal{P}^\ast} Q(x, a, \mathbb{P}^\ast)$ and $Q^-_\mathcal{M}(x, a) = \inf_{\mathbb{P}^\ast \in \mathcal{P}^\ast} Q(x, a, \mathbb{P}^\ast)$.
\end{theorem}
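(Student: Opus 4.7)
The inequality $\sup_{\mathbb{P}^\ast \in \mathcal{P}^\ast} Q(x, a, \mathbb{P}^\ast) \leq Q^+_\mathcal{M}(x, a)$ is immediate from $\mathcal{P}^\ast \subseteq \mathcal{M}$. For the reverse inequality, my plan is to show that every $\mathbb{P} \in \mathcal{M}$ admits a matching $\mathbb{P}^\ast \in \mathcal{P}^\ast$ with $Q(x, a, \mathbb{P}^\ast) = Q(x, a, \mathbb{P})$. Since $Q$ depends on $\mathbb{P}$ only through $\mu := \mathbb{P}(Y(a) \mid x)$, and since the delta kernel together with invertibility of $f^\ast_{x,a}$ forces $\mathbb{P}^\ast(Y(a) \mid x) = (f^\ast_{x,a})_\# \mathbb{P}^\ast(U \mid x)$, I would construct $\mathbb{P}^\ast$ by setting $\mathbb{P}^\ast(U \mid x) := (f^\ast_{x,a})^{-1}_\# \mu$. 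The prescribed $\mathbb{P}^\ast(U \mid x, a)$ then pins down $\mathbb{P}^\ast(U \mid x, A \neq a)$ via the mixture identity for discrete $A$ (and the analogous disintegration for continuous $A$), and this yields a valid probability measure because $\mathbb{P}_{\mathrm{obs}}(a \mid x) \mathbb{P}_{\mathrm{obs}}(Y \mid x, a) \leq \mathbb{P}(Y(a) \mid x)$ as measures, a direct consequence of latent unconfoundedness applied to $\mathbb{P}$. The still-free conditionals $\mathbb{P}^\ast(Y \mid x, u, a')$ for $a' \neq a$ can be chosen independent of $u$ and equal to $\mathbb{P}_{\mathrm{obs}}(Y \mid x, a')$ so that observational compatibility holds for every treatment.

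The core verification is then that $\mathbb{P}^\ast \in \mathcal{M}$, and here I would invoke transformation-invariance. I would represent the conditional $\mathbb{P}(Y \mid X, U, A)$ under $\mathbb{P}$ through a structural equation $Y = h(X, U, A, \eta)$ with noise $\eta$ independent of $(U, A)$ given $X$, and enlarge the latent to $(U, \eta)$. The measurable map $t_{x,a}(u, \eta) := (f^\ast_{x,a})^{-1}(h(x, u, a, \eta))$ pushes $(U, \eta) \mid x$ forward to $\mathbb{P}^\ast(U \mid x)$ and $(U, \eta) \mid x, a$ forward to $\mathbb{P}^\ast(U \mid x, a)$, because $(f^\ast_{x,a})^{-1}(Y(a)) \mid x \sim (f^\ast_{x,a})^{-1}_\# \mu$ and $(f^\ast_{x,a})^{-1}(Y(a)) \mid x, a \sim (f^\ast_{x,a})^{-1}_\# \mathbb{P}_{\mathrm{obs}}(Y \mid x, a)$ by consistency. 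Applying Definition~\ref{def:transformation} to $t_{x, a}$ then gives $\mathcal{D}_{x, a}(\mathbb{P}^\ast(U \mid x), \mathbb{P}^\ast(U \mid x, a)) \leq \mathcal{D}_{x, a}(\widetilde{\mathbb{P}}((U, \eta) \mid x), \widetilde{\mathbb{P}}((U, \eta) \mid x, a))$, where $\widetilde{\mathbb{P}}$ denotes the enlarged-latent distribution.

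The hardest step will be closing this loop: bounding $\mathcal{D}_{x, a}$ at the enlarged latent $(U, \eta)$ by $\mathcal{D}_{x, a}$ at $U$, which is already $\leq \Gamma$. Transformation-invariance as stated only says that transformations decrease $\mathcal{D}$, so appending independent noise is a priori not controlled in the direction we need. What rescues the argument is that $\eta$ is independent of $A$ given $(X, U)$, so the joint densities factor as $\widetilde{\mathbb{P}}((u, \eta) \mid x, \cdot) = \mathbb{P}(u \mid x, \cdot)\, \mathbb{P}(\eta \mid x, u)$ with the second factor identical across both conditioning events. For each of the sensitivity models covered by Lemma~\ref{lem:gtsm}, the functional $\mathcal{D}_{x, a}$ depends on the latent only through the full propensity $\pi(x, \cdot)$ (or its density-ratio analogue for continuous $A$), and this propensity is unchanged by appending independent noise; this yields the needed equality $\mathcal{D}_{x, a}(\widetilde{\mathbb{P}}((U, \eta) \mid x), \widetilde{\mathbb{P}}((U, \eta) \mid x, a)) = \mathcal{D}_{x, a}(\mathbb{P}(U \mid x), \mathbb{P}(U \mid x, a)) \leq \Gamma$. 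With $\mathbb{P}^\ast \in \mathcal{M}$ thereby established, taking the supremum over $\mathbb{P} \in \mathcal{M}$ completes the upper-bound claim; the infimum claim follows by the symmetric argument.
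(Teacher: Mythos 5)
Your proof takes essentially the same route as the paper's: both construct, for a near-optimal $\mathbb{P}$, the transformation $t = (f^\ast_{x,a})^{-1} \circ (\text{latent-to-}Y\text{ map})$ into the prescribed latent space $\mathcal{U} = \R^{d_y}$, push the pre- and post-intervention latent laws forward along $t$, and invoke transformation-invariance (Definition~\ref{def:transformation}) to keep $\mathcal{D}_{x,a}$ below $\Gamma$. Your direct construction of $\mathbb{P}^\ast$ from an arbitrary $\mathbb{P} \in \mathcal{M}$ (then supping at the end) is slightly cleaner than the paper's explicit $(\widetilde{\mathbb{P}}_\ell)_{\ell}$ sequence, but is conceptually the same move.

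Where you genuinely depart from the paper is in handling the exogenous noise. The paper declares it is ``without loss of generality'' that the near-optimal $\widetilde{\mathbb{P}}_\ell$ has an SCM form $Y = \widetilde{f}_{x,a,\ell}(\widetilde{U})$ with the outcome noise already folded into the latent $\widetilde{\mathcal{U}}$, and then applies transformation-invariance starting from that enlarged latent. You correctly notice that this ``WLOG'' is doing real work: projecting $(U, \eta) \mapsto U$ is a transformation, so transformation-invariance only gives $\mathcal{D}_{x,a}$ \emph{decreases} when you drop the noise, i.e., the inequality runs the wrong way for justifying that the noise-absorbed distribution still lies in $\mathcal{M}$. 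You then close this gap by verifying directly that for the MSM, $f$-sensitivity, and Rosenbaum models, $\mathcal{D}_{x,a}$ is unchanged when you append $\eta$ (the density-ratio factor in $\eta$ cancels). This makes your argument more careful than the paper's at exactly that point, but also narrows its scope: the theorem claims validity for any transformation-invariant GTSM, and your argument only establishes it for GTSMs that are additionally \emph{invariant under appending conditionally-$A$-independent noise}. To fully match the theorem's stated generality you would either need to prove that every transformation-invariant functional satisfying a GTSM-style constraint also has this noise-invariance property, or else add it as an explicit hypothesis; the paper's proof faces the same unaddressed gap but does not flag it.

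One small additional caveat that neither you nor the paper explicitly discharges: $\mathbb{P}^\ast \in \mathcal{M}$ requires the sensitivity constraint $\mathcal{D}_{x',a'} \leq \Gamma$ for \emph{all} $(x', a')$, but both arguments only check it at the fixed $(x, a)$. This is benign for $x' \neq x$ (you are free to pick those conditionals) and for binary $A$ under the models of Lemma~\ref{lem:gtsm} (the $a'=0$ and $a'=1$ constraints are equivalent by the odds-ratio symmetry), but it is worth stating.
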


\textbf{Intuition:} Theorem~\ref{thrm:main} has two major implications: (i)~It is sufficient to fix the distributions $\mathbb{P}^\ast(U \mid x, a)$ and $\mathbb{P}^\ast(Y \mid x, u, a)$, i.e., the components in the right supremum of Eq.~(4) and only optimize over the left supremum; and (ii)~it is sufficient to choose $\mathbb{P}^\ast(Y \mid x, u, a) = \delta(Y - f^\ast_{x, a}(u))$ as a delta-distribution induced by an invertible function $f^\ast_{x, a} \colon \mathcal{U} \to \mathcal{Y}$, which satisfies the data-compatibility constraint $\mathbb{P}_\mathrm{obs}(Y \mid x, a) =  \mathbb{P}^\ast(f^\ast_{x, a}(U) \mid x, a)$. \textbf{Intuition for (i):} In Eq.~\eqref{eq:alternating_opt}, we optimize jointly over all components of the full distribution. This suggests that there are multiple solutions that differ only in the components of unobserved parts of $\mathbb{P}$ (i.e., in $\mathcal{U}$) but lead to the same potential outcome distribution and causal query. Theorem~\ref{thrm:main} states that we may restrict the space of possible solutions by fixing the components $\mathbb{P}^\ast(U \mid x, a)$ and $\mathbb{P}^\ast(Y \mid x, a, u)$, without loosing the ability to attain the optimal upper bound $Q^+_\mathcal{M}(x, a)$ from Eq.~\eqref{eq:alternating_opt}. \textbf{Intuition for (ii):} We cannot pick \emph{any} $\mathbb{P}^\ast(Y \mid x, a, u)$ that satisfies Eq.~\eqref{eq:obs_dist}. For example, any distribution that induces $Y \indep U \mid X, A$ would satisfy Eq.~\eqref{eq:obs_dist}, but implies unconfoundedness and would thus not lead to a valid upper bound $Q^+_\mathcal{M}(x, a)$. Intuitively, we have to choose a $\mathbb{P}(Y \mid x, a, u)$ that induces ``maximal dependence" (mutual information) between $U$ and $Y$ (conditioned on $X$ and $A$), because the GTSM does not restrict this part of the full probability distribution $\mathbb{P}$. The maximal mutual information is achieved if we choose $\mathbb{P}(Y \mid x, a, u) = \delta(Y - f^\ast_{x, a}(u))$. 

\subsection{Neural instantiation: \frameworkname}\label{sec:neural}

We now provide a neural instantiation called \frameworkname for the above two-stage procedure using conditional normalizing flows (CNFs) \citep{Winkler.2019}. The architecture of \frameworkname is shown in Fig.~\ref{fig:architecture}. \frameworkname instantiates the two-step procedure as follows:
\begin{wrapfigure}{r}{0.5\textwidth}
 \centering
 \begin{center}
 \includegraphics[width=1\linewidth]{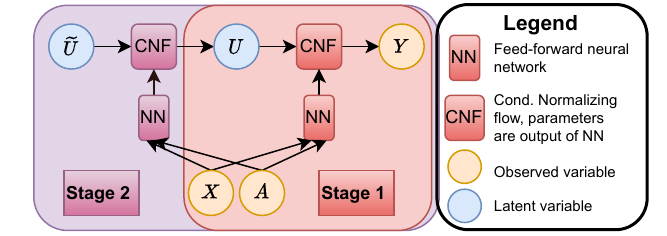}
 \end{center}
 \vspace{-0.4cm}
\caption{Architecture of \frameworkname.}
\label{fig:architecture}
 \vspace{-0.8cm}
\end{wrapfigure}

\textbf{Stage 1:} We fix $\mathbb{P}^\ast(U \mid x, a)$ to the standard normal distribution on $\mathcal{U} = \R^{d_y}$. Our task is then to learn an invertible function $f^\ast_{x, a} \colon \mathcal{U} \to \mathcal{Y}$ that maps the standard Gaussian distribution on $\mathcal{U}$ to $\mathbb{P}_\mathrm{obs}(Y \mid x, a)$. We model $f^\ast_{x, a}$ as a CNF $f^\ast_{g^\ast_\theta(x, a)}$, where $f^\ast$ is a normalizing flow \citep{Rezende.2015}, for which the parameters are the output of a fully connected neural network $g^\ast_\theta$, which itself is parametrized by $\theta$ \citep{Winkler.2019}. We obtain $\theta$ by maximizing the empirical Stage~1 loss    $\mathcal{L}_1(\theta) =  \sum_{i=1}^n \log \mathbb{P} ({f^\ast_{g^\ast_\theta(x_i, a_i)}}(U) = y_i)$, where $U \sim \mathcal{N}(0_{d_y}, I_{d_y})$ is standard normally distributed. The stage 1 loss can be computed analytically via the change-of-variable formula (see Appendix~\ref{app:implementation}).

\textbf{Stage 2:} In Stage~2, we need to maximize over distributions on $U$ in the latent space $\mathcal{U}$ that maximize the causal query $\mathcal{F}(\mathbb{P}(f^\ast_{g^\ast_{\theta_\mathrm{opt}}(x, a)}(U) \mid x))$, where $\theta_\mathrm{opt}$ is a solution from maximizing $\mathcal{L}_1(\theta)$ in stage~1. We can do this by learning a {second CNF} $\widetilde{f}_{\widetilde{g}_\eta(x, a)}$, where $\widetilde{f} \colon \widetilde{\mathcal{U}} \to \mathcal{U}$ is a normalizing flow that maps a standard normally distributed auxiliary $\widetilde{U} \sim \mathcal{N}(0_{d_y}, I_{d_y})$ to the latent space $\mathcal{U}$, and whose parameters are the output of a fully connected neural network $\widetilde{g}_\eta$ parametrized by $\eta$. The CNF $\widetilde{f}_{\widetilde{g}_\eta(x, a)}$ from Stage~2 induces a new distribution on $U$, which mimics the shift due to unobserved confounding when intervening instead of conditioning (i.e., going from Eq.~\eqref{eq:obs_dist} to Eq.~\eqref{eq:int_dist}). We can compute the query under the shifted distribution by concatenating the Stage~2 CNF with the Stage~1 CNF and applying $\mathcal{F}$ to the shifted outcome distribution (see Fig.~\ref{fig:architecture}). More precisely, we optimize $\eta$ by maximizing or minimizing the empirical Stage~2 loss 
\vspace{-0.1cm}
\begin{equation}\label{eq:loss_stage2}
\resizebox{.7\hsize}{!}{
    $\mathcal{L}_2(\eta) =  \sum_{i=1}^n \mathcal{F} \left( \mathbb{P} \left({f^\ast_{g^\ast_{\theta_\mathrm{opt}}(x_i, a_i)}}\left( (1 - \xi_{x_i, a_i}) \widetilde{f}_{\widetilde{g}_\eta(x_i, a_i)}(\widetilde{U}) +  \xi_{x_i, a_i} \widetilde{U}\right) \right) \right)$}, 
\end{equation}
\vspace{-0.1cm}
where $\xi_{x_i, a_i} = \mathbb{P}_\mathrm{obs}(a_i \mid x_i))$, if $A$ is discrete, and $\xi_{x_i, a_i} = 0$, if $A$ is continuous. 

\textbf{Learning algorithm for stage 2:} There are two remaining challenges we need to address in Stage~2: (i)~optimizing Eq.~\eqref{eq:loss_stage2} does not ensure that the sensitivity constraints imposed by the GTSM $\mathcal{M}$ hold; and (ii)~computing the Stage~2 loss from Eq.~\eqref{eq:loss_stage2} may not be analytically tractable. For (i), we propose to incorporate the sensitivity constraints by using the augmented Lagrangian method \citep{Nocedal.2006}, which has already been successfully applied in the context of partial identification with neural networks \citep{Padh.2023, Schröder.2024}. For (ii), we propose to obtain samples $\widetilde{u} = (\widetilde{u}^{(j)}_{x, a})_{j=1}^k \overset{\text{i.i.d.}}{\sim} \mathcal{N}(0_{d_y}, I_{d_y})$ and $\xi = (\xi^{(j)}_{x, a})_{j=1}^k \overset{\text{i.i.d.}}{\sim} \mathrm{Bernoulli}(\mathbb{P}_\mathrm{obs}(a \mid x))$ together with Monte Carlo estimators $\hat{\mathcal{L}}_2(\eta, \widetilde{u}, \xi)$ of the Stage~2 loss $\mathcal{L}_2(\eta)$ and $\hat{\mathcal{D}}_{x, a}(\eta, \widetilde{u})$ of the sensitivity constraint $\mathcal{D}_{x, a}(\mathbb{P}(U \mid x), \mathbb{P}(U \mid x, a))$. 
We refer to Appendix~\ref{app:learning_alg} for details, including instantiations of our framework for numerous sensitivity models and causal queries.


\textbf{Implementation:} We use autoregressive neural spline flows \citep{Durkan.2019, Dolatabadi.2020}. For estimating propensity scores $\mathbb{P}_\mathrm{obs}(a \mid x)$, we use fully connected neural networks with softmax activation. We perform training using the Adam optimizer \citep{Kingma.2015}. We choose the number of epochs such that \frameworkname satisfies the sensitivity constraint for a given sensitivity parameter. Details are in Appendix~\ref{app:implementation}.

\section{Experiments}\label{sec:experiments}
 
\begin{wrapfigure}{r}{0.6\textwidth}
\vspace{-1.7cm}
 \centering
\begin{subfigure}{0.30\textwidth}
  \centering
  \includegraphics[width=1\linewidth]{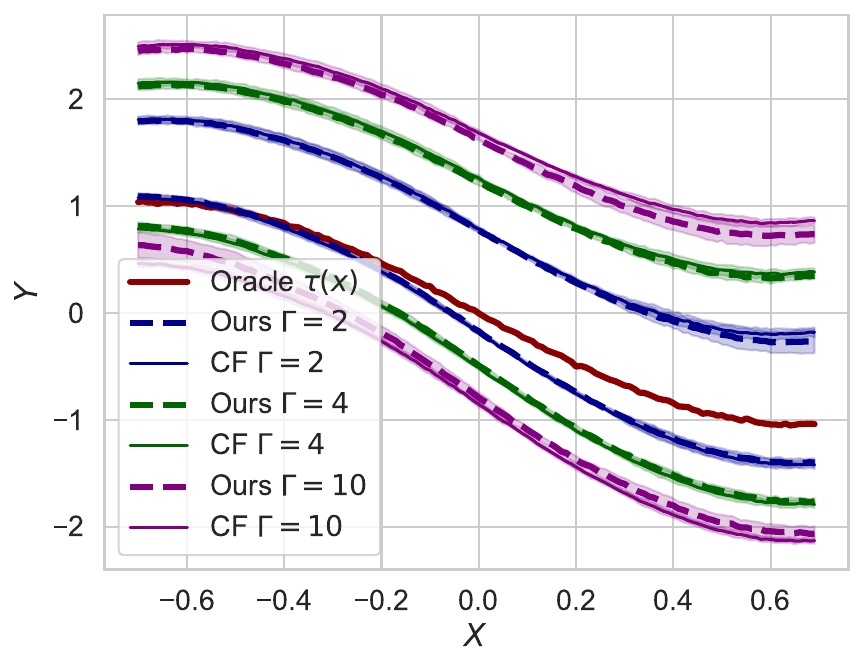}
\end{subfigure}%
\begin{subfigure}{0.30\textwidth}
  \centering
  \includegraphics[width=1\linewidth]{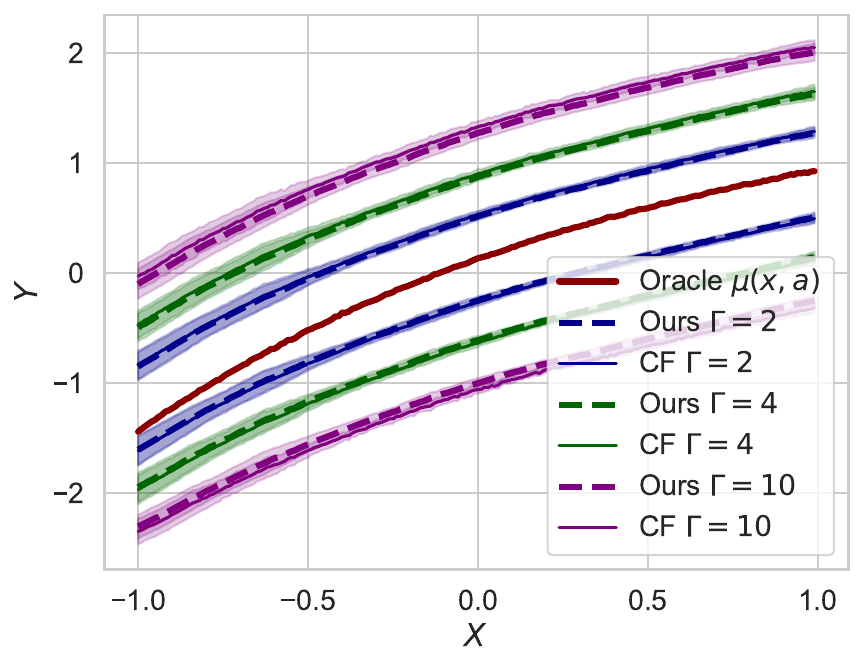}
\end{subfigure}
\vspace{-0.6cm}
\caption{Validating the correctness of \frameworkname (ours) by comparing with optimal closed-form solutions (CF) for the MSM on simulated data. \emph{Left:} Dataset 1, binary treatment. \emph{Right:} Dataset 2, continuous treatment. Reported: mean $\pm$ standard deviation over 5 runs.}
\label{fig:simulation_msm}
\vspace{-0.5cm}
\end{wrapfigure}
We now demonstrate the effectiveness of \frameworkname for causal sensitivity analysis empirically.
As is common in the causal inference literature, we use synthetic and semi-synthetic data with known causal ground truth to evaluate \frameworkname \citep{Kallus.2019, Jesson.2022}. We proceed as follows: (i)~We use synthetic data to show the validity of bounds from \frameworkname under multiple sensitivity models, treatment types, and causal queries. We also show that for the MSM, the \frameworkname bounds coincide with known optimal solutions. (ii)~We show the validity of the \frameworkname bounds using a semi-synthetic dataset. (iii)~We show the applicability of \frameworkname in a case study using a real-world dataset with multiple outcomes, which cannot be handled by previous approaches. We refer to Appendix~\ref{app:avg_diff} for details regarding datasets and experimental evaluation, and to Appendix~\ref{app:more_exp} for additional experiments.

\begin{wrapfigure}{l}{0.6\textwidth}
\vspace{-0.5cm}
 \centering
\begin{subfigure}{0.30\textwidth}
  \centering
  \includegraphics[width=1\linewidth]{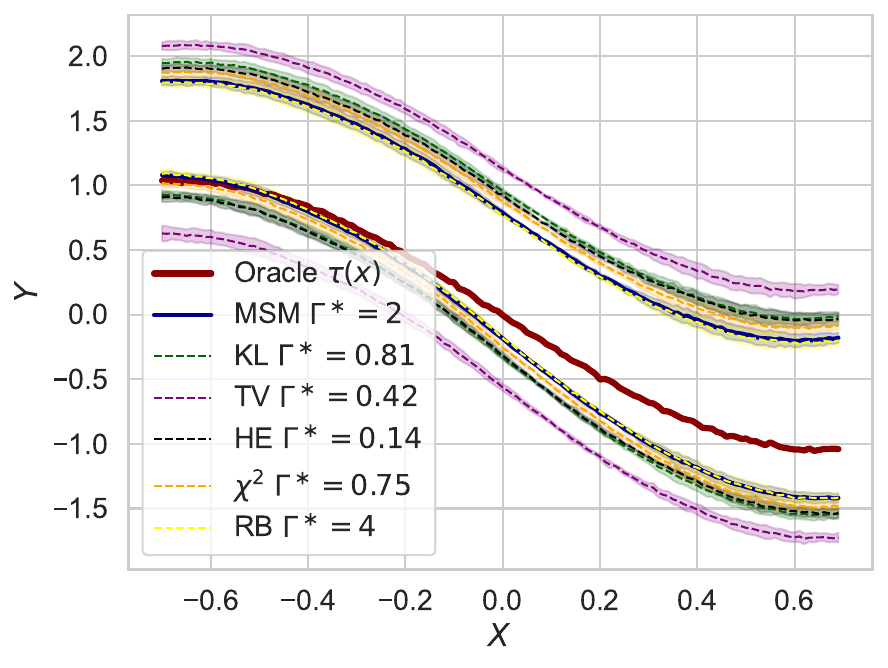}
\end{subfigure}%
\begin{subfigure}{0.30\textwidth}
  \centering
  \includegraphics[width=1\linewidth]{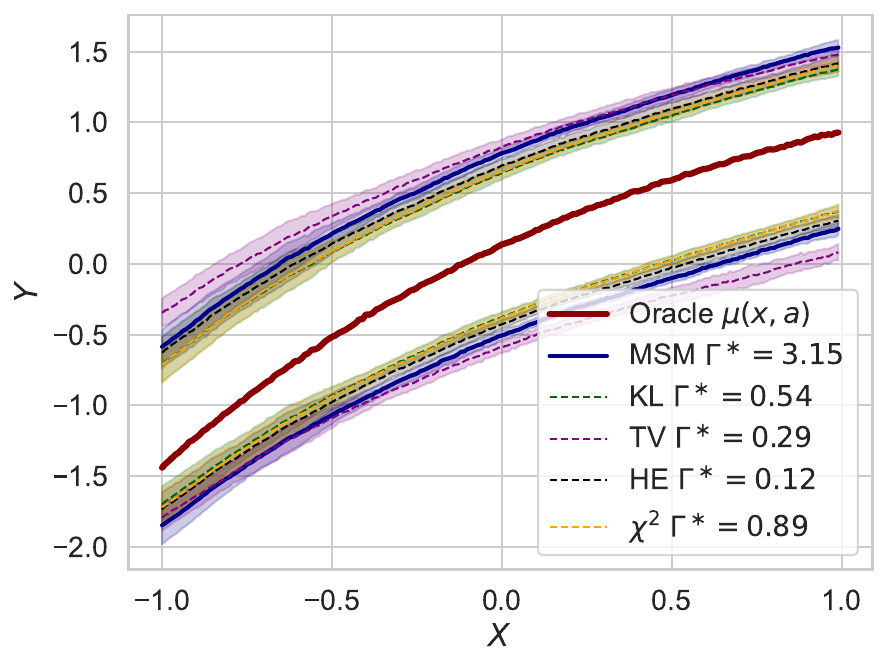}
\end{subfigure}
\vspace{-0.7cm}
\caption{Confirming the validity of our \frameworkname bounds for various sensitivity models. \emph{Left:} Dataset 1, binary treatment. \emph{Right:} Dataset 2, continuous treatment. Reported: mean $\pm$ standard deviation over 5 runs.}
\label{fig:simulation_f}
\vspace{-0.3cm}
\end{wrapfigure}

\textbf{(i)~Synthetic data:} We consider two synthetic datasets of sample size $n=10000$ inspired from previous work on sensitivity analysis: Dataset~1 is adapted from \citet{Kallus.2019} and has a binary treatment $A \in \{0,1\}$. The data-generating process follows an MSM with oracle sensitivity parameter $\Gamma^\ast = 2$. We are interested in the CATE $\tau(x) = \E[Y(1) - Y(0) \mid x]$. Dataset~2 is adapted from \citet{Jesson.2022} and has a continuous treatment $A \in [0,1]$. Here, we are interested in the dose-response function $\mu(x, a) = \E[Y(a) \mid x]$, where we choose $a = 0.5$. We report results for further treatment values in Appendix~\ref{app:more_exp}.

We first compare our \frameworkname bounds with existing results closed-form bounds (CF) for the MSM \citep{Dorn.2022, Frauen.2023c}, which have been proven to be optimal. We plot both \frameworkname and the CF for both datasets and three choices of sensitivity parameter $\Gamma \in \{2, 4, 10\}$ (Fig.~\ref{fig:simulation_msm}). Our bounds almost coincide with the optimal CF solutions, which confirms that \frameworkname learns optimal bounds under the MSM.

We also show the validity of our \frameworkname bounds for Rosenbaum's sensitivity model and the following $f$-sensitivity models: Kullbach-Leibler (KL, $f(x) = x \log(x)$), Total Variation (TV, $f(x) = 0.5 |x-1|$), Hellinger (HE, $f(x) = (\sqrt{x} - 1)^2$), and Chi-squared ($\chi^2$, $f(x) = (x-1)^2$). To do so, we choose the ground-truth sensitivity parameter $\Gamma^\ast$ for each sensitivity model that satisfies the respective sensitivity constraint (see Appendix~\ref{app:details_exp} for details). The results are in Fig.~\ref{fig:simulation_f}. We make the following observations: (i)~all bounds cover the causal query on both datasets, thus confirming the validity of \frameworkname. (ii)~For Dataset~1, the MSM returns the tightest bounds because our simulation follows an MSM.

\textbf{(ii)~Semi-synthetic data:}
We create a semi-synthetic dataset using MIMIC-III \citep{Johnson.2016}, which includes electronic health records from patients admitted to intensive care units. We extract $8$ confounders and a binary treatment (mechanical ventilation). Then, we augment the data with a synthetic unobserved confounder and outcome. We obtain $n=14719$ patients and split the data into train (80\%), val (10\%), and test (10\%). For details, see Appendix~\ref{app:details_exp}.

We verify the validity of our \frameworkname bounds for CATE in the following way: For each sensitivity model, we obtain the smallest oracle sensitivity parameter $\Gamma^\ast$ that guarantees coverage (i.e., satisfies the respective sensitivity constraint) for 50\% of the test samples. Then, we plot the coverage and median interval length of the \frameworkname bounds over the test set. The results are in Table~\ref{tab:semi_synthetic}.
\begin{wrapfigure}{l}{0.25\textwidth}
\vspace{-0.4cm}
 \centering
 \begin{center}
 \includegraphics[width=1\linewidth]{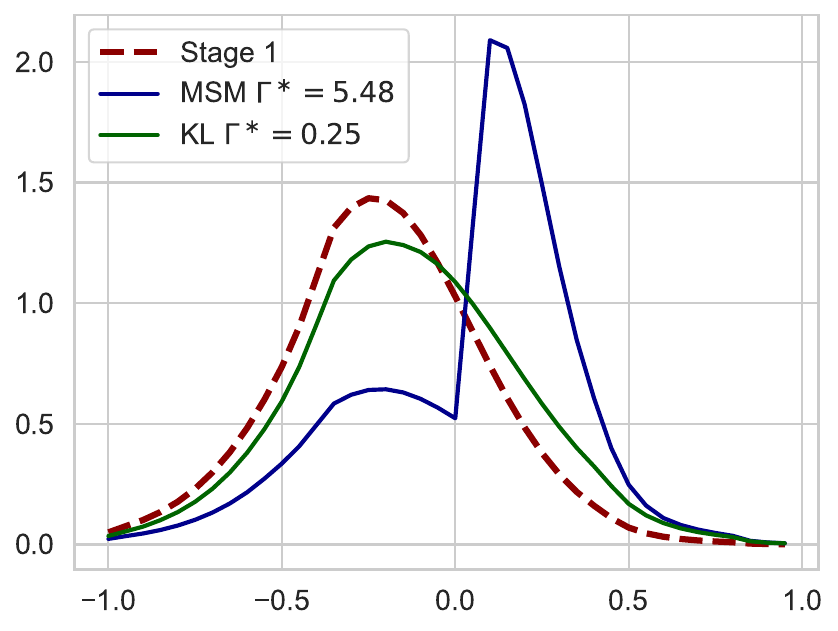}
 \end{center}
 \vspace{-0.5cm}
\caption{Analytic stage 2 densities for MSM and KL-sensitivity model (upper bounds).}
\label{fig:semi_synthetic}
\vspace{-0.3cm}
\end{wrapfigure}
We observe that (i)~all bounds achieve at least 50\% coverage, thus confirming the validity of the bounds, and (ii)~some sensitivity models (e.g., the MSM) are conservative, i.e., achieve much higher coverage and interval length than needed. This is because the sensitivity constraints of these models do not adapt well to the data-generating process, thus the need for choosing a large $\Gamma^\ast$ to guarantee coverage. This highlights the importance of choosing a sensitivity model that captures the data-generating process well. For further details, we refer to \citep{Jin.2022}.
\begin{wraptable}{r}{0.4\textwidth}
\vspace{-0.4cm}
\caption{Results for semi-synthetic data}
\vspace{-0.4cm}
\label{tab:semi_synthetic}
\resizebox{0.4\textwidth}{!}{
\begin{tabular}{lcc}
\noalign{\smallskip} \toprule \noalign{\smallskip}
{Sensitivity model}& Coverage & Interval length\\
\midrule
MSM $\Gamma^\ast = 5.48$ & $0.91 \pm 0.03$ & $0.77 \pm 0.03$ \\
KL $\Gamma^\ast = 0.25$ & $0.54 \pm 0.07$ & $0.31 \pm 0.01$ \\
TV $\Gamma^\ast = 0.38$ & $0.86 \pm 0.09$ & $0.83 \pm 0.14$ \\
HE $\Gamma^\ast = 0.18$ & $0.83 \pm 0.06$ & $0.63 \pm 0.03$ \\
$\chi^2$ $\Gamma^\ast = 0.68$ & $0.67 \pm 0.07$ & $0.41 \pm 0.01$ \\
RB $\Gamma^\ast = 14.42$ & $0.79 \pm 0.07$ & $0.56 \pm 0.03$ \\
\bottomrule
\multicolumn{3}{l}{Reported: mean $\pm$ standard deviation ($5$ runs).}
\end{tabular}}
\vspace{-0.3cm}
\end{wraptable}
We also provide further insights into the difference between two exemplary sensitivity models: the MSM and the KL-sensitivity model. To do so, we plot the observational distribution from stage 1 together with the shifted distributions from stage 2 that lead to the respective upper bound for a fixed test patient (Fig.~\ref{fig:semi_synthetic}). The distribution shift corresponding to the MSM is a step function, which is consistent with results from established literature \citep{Jin.2023}. This is in contrast to the smooth distribution shift obtained by the KL-sensitivity model. In addition, this example illustrates the possibility of using \frameworkname for sensitivity analysis on the \emph{entire interventional density}.

\textbf{(iii)~Case study using real-world data:} We now demonstrate an application of \frameworkname to perform causal sensitivity analysis for an interventional distribution on multiple outcomes. To do so, we use the same MIMIC-III data from our semi-synthetic experiments but add two outcomes: heart rate ($Y_1$) and blood pressure ($Y_2$). We consider the causal query $\mathbb{P}(Y_1(1) \geq 115, Y_2(1) \geq 90 \mid X = x)$, i.e., the joint probability of achieving a heart rate higher than $115$ and a blood pressure higher than $90$ under treatment intervention (``danger area''). We consider an MSM and train \frameworkname with sensitivity parameters $\Gamma \in \{2, 4\}$. Then, we plot the stage 1 distribution together with both stage 2 distributions for a fixed, untreated patient from the test set in Fig.~\ref{fig:real}.

\begin{wrapfigure}{r}{0.62\textwidth}
\vspace{-0.35cm}
 \centering
\begin{subfigure}{0.2\textwidth}
  \centering
  \includegraphics[width=1\linewidth]{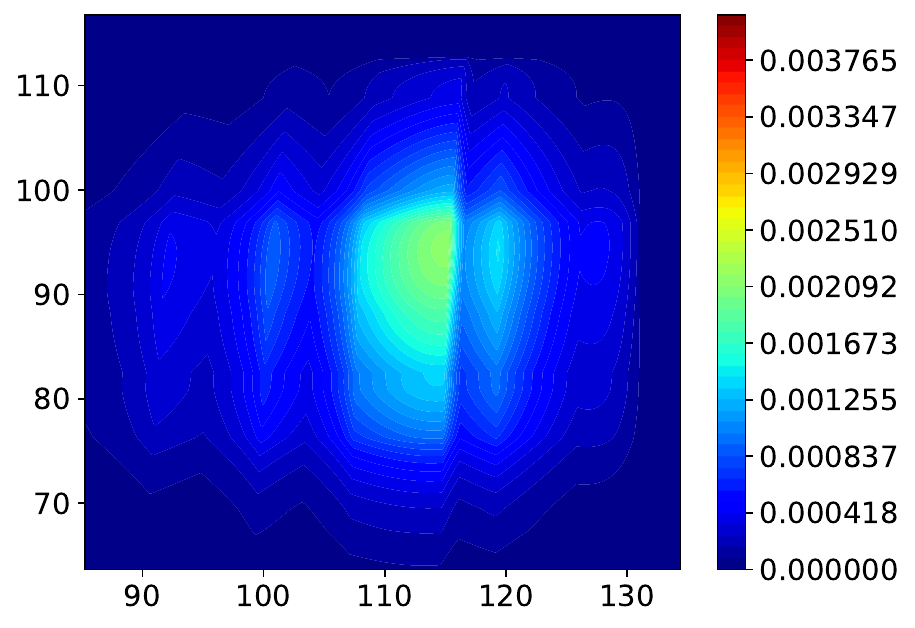}
\end{subfigure}%
\begin{subfigure}{0.2\textwidth}
  \centering
  \includegraphics[width=1\linewidth]{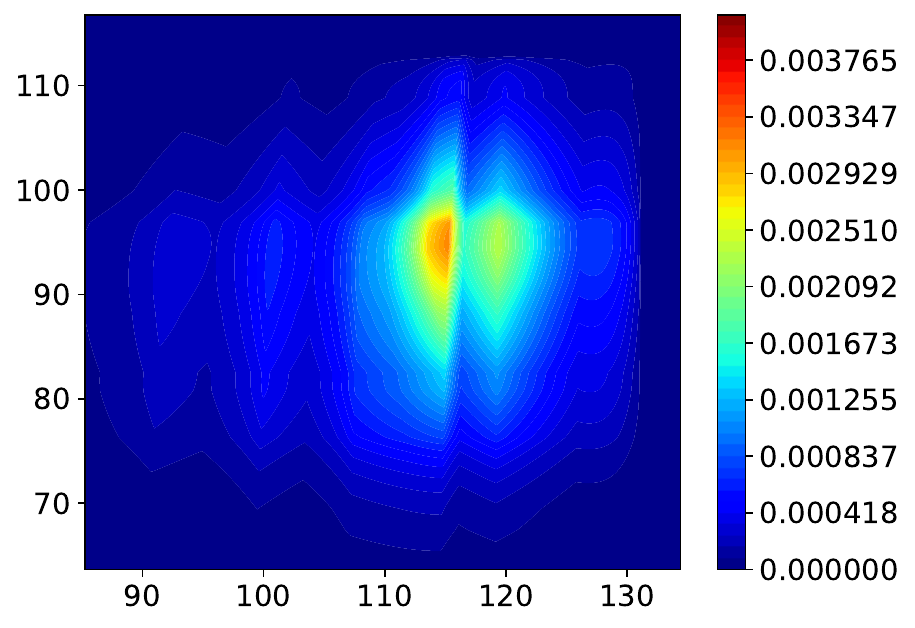}
\end{subfigure}
\begin{subfigure}{0.2\textwidth}
  \centering
  \includegraphics[width=1\linewidth]{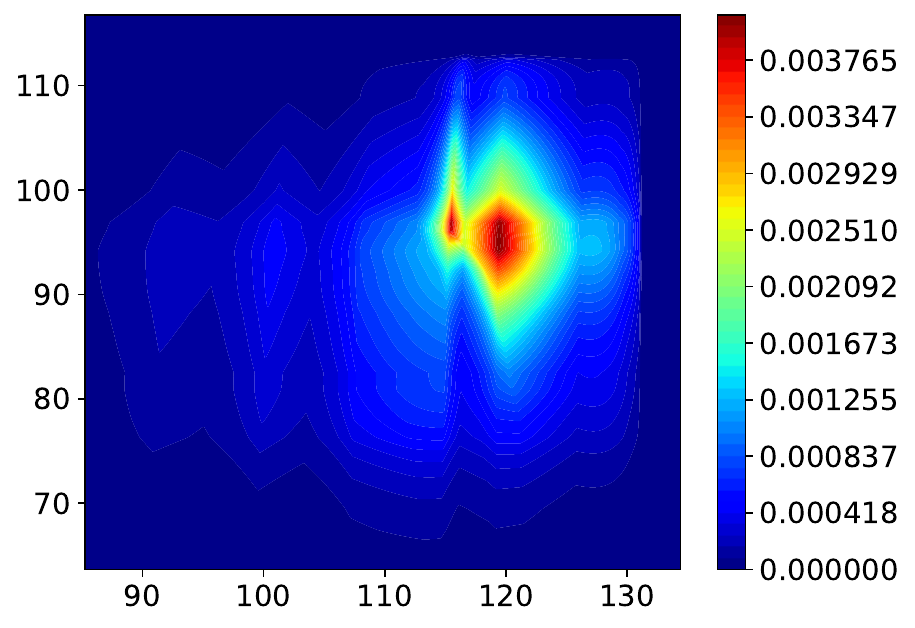}
\end{subfigure}
\vspace{-0.3cm}
\caption{Contour plots of 2D densities obtained by \frameworkname under an MSM. Here, we aim to learn an upper bound of the causal query $\mathbb{P}(Y_1(1) \geq 115, Y_2(1) \geq 90 \mid X = x_0)$ for a test patient $x_0$. \emph{Left:} Stage 1/ observational distribution. \emph{Middle:} Stage 2, $\Gamma = 2$. \emph{Right:} Stage 2, $\Gamma = 4$.}
\label{fig:real}
\vspace{-0.3cm}
\end{wrapfigure}

As expected, increasing $\Gamma$ leads to a distribution shift in the direction of the ``danger area'', i.e., high heart rate and high blood pressure. For $\Gamma = 2$, there is only a moderate fraction of probability mass inside the danger area, while, for $\Gamma = 4$, this fraction is much larger. A practitioner may potentially decide against treatment if there are other unknown factors (e.g., undetected comorbidity) that could result in a confounding strength of $\Gamma = 4$.



\textbf{Conclusion.} From a methodological perspective, \frameworkname offers new ideas to causal sensitivity analysis and partial identification: In contrast to previous methods, \frameworkname explicitly learns a latent distribution shift due to treatment intervention. We refer to Appendix~\ref{app:discussion} for a discussion on limitations and future work. From an applied perspective, \frameworkname enables practitioners to perform causal sensitivity analysis in numerous settings, including multiple outcomes. Furthermore, it allows for choosing from a wide variety of sensitivity models, which may be crucial to effectively incorporate domain knowledge about the data-generating process.

\clearpage

\textbf{Acknowledgements.} S.F. acknowledges funding via Swiss National Science Foundation Grant 186932.




\bibliography{bibliography.bib}
\bibliographystyle{iclr2024_conference}

\clearpage

\appendix
\section{Extended related work}\label{app:rw}

In the following, we provide an extended related work. Specifically, we elaborate on (1)~a systematic overview of causal sensitivity analysis, (2)~its application in settings beyond partial identification of interventional causal queries, and (3)~point identification and estimation of causal queries.

\subsection{A systematic overview on causal sensitivity analysis}

In Table~\ref{tab:app_rw}, we provide a systematic overview of existing works for causal sensitivity analysis, which we group by the underlying sensitivity model, the treatment type, and the causal query. As such, Table~\ref{tab:app_rw} extends Table~\ref{tab:rw} in that we follow the same categorization but now point to the references from the literature.

\begin{table}[ht]
\caption{Overview of key works for causal sensitivity analyses under the MSM, $f$-sensitivity models, or Rosenbaum's sensitivity model. Settings with no existing literature are indicated with a red cross (\redcross). Treatments are either binary or continuous. \frameworkname framework is applicable in all settings.
}
\label{tab:app_rw}
\vspace{-0.3cm}
\resizebox{1\textwidth}{!}{
\begin{tabular}{l ll ll ll}
\noalign{\smallskip} \toprule \noalign{\smallskip}
\backslashbox{Causal query}{Sensitivity model}& \multicolumn{2}{c}{MSM} & \multicolumn{2}{c}{$f$-sensitivity} & \multicolumn{2}{c}{Rosenbaum}\\
\cmidrule(lr){2-3} \cmidrule(lr){4-5} \cmidrule(lr){6-7} & Binary & Cont.\textsuperscript{($^\dagger$)} & Binary & Cont. & Binary & Cont.\\
\midrule
CATE & \cite{Tan.2006} & \cite{Bonvini.2022}  & \cite{Jin.2022} &  \multicolumn{1}{c}{\redcross} & \cite{Rosenbaum.1983b}  & \multicolumn{1}{c}{\redcross}  \\
 & \cite{Kallus.2019} & \cite{Jesson.2022}  &  &  & \cite{Rosenbaum.1987} &   \\
& \cite{Zhao.2019} & \cite{Frauen.2023c}  &  &  & \cite{Heng.2021}  &  \\
  & \cite{Jesson.2021} &   &  &  &  &  \\
  & \cite{Dorn.2022} &   &  &  &  &  \\
  & \cite{Dorn.2022b} &   &  &  &  &  \\
  & \cite{Oprescu.2023} &   &  &  &  &  \\
    & \cite{Soriano.2023} &   &  &  &  &  \\
\midrule
Distributional effects & \cite{Frauen.2023c} & \citep{Frauen.2023c} & \multicolumn{1}{c}{\redcross}& \multicolumn{1}{c}{\redcross} & \multicolumn{1}{c}{\redcross}& \multicolumn{1}{c}{\redcross}\\
\midrule
Interventional density & \cite{Jin.2023} & \cite{Frauen.2023c}  & \cite{Jin.2022}  & \multicolumn{1}{c}{\redcross} & \multicolumn{1}{c}{\redcross}  & \multicolumn{1}{c}{\redcross}  \\
 & \cite{Yin.2022} &  & &  & & \\
  & \cite{Marmarelis.2023b} &  & &  & & \\
 & \cite{Frauen.2023c} &  & &  & & \\
\midrule
Multiple outcomes & \multicolumn{1}{c}{\redcross} & \multicolumn{1}{c}{\redcross} & \multicolumn{1}{c}{\redcross} & \multicolumn{1}{c}{\redcross} &\multicolumn{1}{c}{\redcross} & \multicolumn{1}{c}{\redcross}  \\
\bottomrule
\multicolumn{7}{p{1\textwidth}}{\tiny ($^\dagger$) The MSM for continuous treatment is also called continuous MSM (CMSM) \citep{Jesson.2022}.}\\
\end{tabular}}
\end{table}

Evidently, many works have focused on sensitivity analysis for CATE in binary treatment settings. For many settings, such as $f$-sensitiivity and Rosenbaum's sensitivity model with continuous treatments or multiple outcomes, no previous work exists. Here, \frameworkname is the first work that allows for computing bounds in these settings.

\subsection{Sensitivity analysis in other causal settings}

Causal sensitivity analysis has found applicability not only in addressing the partial identification problem, as discussed in Eq.~\eqref{eq:partial_identification}, but also in various domains of machine learning and causal inference. We briefly highlight some notable instances where ideas from causal sensitivity analysis have made substantial contributions.

One such stream of literature is off-policy learning, where sensitivity models have been leveraged to account for unobserved confounding or distribution shifts \citep{Kallus.2018c, Hatt.2022b}. Here, sensitivity analysis enables robust policy learning. Another example is algorithmic fairness, where sensitivity analysis has been used to study causal fairness notions (e.g., counterfactual fairness) under unobserved confounding \citep{Kilbertus.2019}. Finally have been used to study the partial identification of counterfactual queries \citep{Melnychuk.2023b}

\subsection{Point identification and estimation}

If we replace the latent unconfoundedness assumption in Assumtion~\ref{ass:causal} with (non-latent) unconfoundedness, that is,
\begin{equation}
    Y(a) \indep A \mid X \quad \text{for all} \quad a \in \mathcal{A},
\end{equation}
we can point-identify the distribution of the potential outcomes via
\begin{equation}
    \mathbb{P}(Y(a) = y \mid x) = \mathbb{P}(Y(a) = y \mid x, a)  = \mathbb{P}_\mathrm{obs}(y \mid x, a).
\end{equation}
Hence, inferring the causal query $Q(x, a, \mathbb{P})$ reduces to a purely statistical inference problem, i.e., estimating $\mathcal{F}(\mathbb{P}_\mathrm{obs}(Y \mid x, a))$ from finite data.

Various methods for estimating point-identified causal effects under unconfoundedness have been proposed. In recent years, a particular emphasis has been on methods for estimating (conditional) average treatment effects (CATEs) that make use of machine learning to model flexible non-linear relationships within the data. Examples include forest-based methods \citep{Wager.2018} and deep learning \citep{Johansson.2016, Shalit.2017, Yoon.2018, Shi.2019}. Another stream of literature incorporates theory from semi-parametric statistics and provides robustness and efficiency guarantees \citep{vanderLaan.2006, Chernozhukov.2018, Kunzel.2019, Curth.2021, Kennedy.2023b}. Beyond CATE, methods have also been proposed for estimating distributional effects or potential outcome densities \citep{Chernozhukov.2013, Muandet.2021, Kennedy.2023}. In particular, \citet{Melnychuk.2023} proposed normalizing flows for potential outcome densities. Finally, \citet{Schweisthal.2023} leveraged normalizing flows for estimating the generalized propensity score in a setting with continuous treatment. We emphasize that all these methods focus on \emph{estimation} of \emph{point-identified} causal queries, while we are interested in \emph{causal sensitivity analysis} and thus \emph{partial identification} under violations of the unconfoundedness assumption.

\clearpage

\section{Proofs}\label{app:proofs}

\subsection{Proof of Lemma~\ref{lem:gtsm}}

We provide a proof for the following more detailed version of Lemma~\ref{lem:gtsm}.

\begin{lemma}\label{lem:app_gtsm}
The MSM, the $f$-sensitivity model, and Rosenbaum's sensitivity model are GTSMs with sensitivity parameter $\Gamma$. Let $\rho(x, u, a) = \frac{1}{1 - \mathbb{P}(a \mid x)} \left(\frac{\mathbb{P}(u \mid x)}{\mathbb{P}(u \mid x, a)} - \mathbb{P}(a \mid x) \right)$ and $\rho(x, u_1, u_2, a) = \frac{\mathbb{P}(u_1 \mid x, a) \mathbb{P}(u_2 \mid x) - \mathbb{P}(u_1 \mid x, a) \mathbb{P}(u_2 \mid x, a) \mathbb{P}(a \mid x)}{\mathbb{P}(u_2 \mid x, a) \mathbb{P}(u_1 \mid x) - \mathbb{P}(u_1 \mid x, a) \mathbb{P}(u_2 \mid x, a) \mathbb{P}(a \mid x)}$. For the MSM, we have
\begin{equation}
\mathcal{D}_{x,a}(\mathbb{P}(U \mid x), \mathbb{P}(U \mid x, a)) = \max\left\{\sup_{u \in \mathcal{U}} \rho(x, u, a), \, \sup_{u \in \mathcal{U}} \rho(x, u, a)^{-1} \right\}.
\end{equation}
For $f$-sensitivity models, we have 
\begin{equation}
\mathcal{D}_{x, a}(\mathbb{P}(U \mid x), \mathbb{P}(U \mid x, a)) = \max\left\{\int_{\mathcal{U}} f(\rho(x, u, a)) \mathbb{P}(u \mid x, a) \diff u, \, \int_{\mathcal{U}} f(\rho(x, u, a)^{-1}) \mathbb{P}(u \mid x, a) \diff u\right\}.
\end{equation}
For Rosenbaum's sensitivity model, we have
\begin{equation}
\mathcal{D}_{x, a}(\mathbb{P}(U \mid x), \mathbb{P}(U \mid x, a)) = \max\left\{\sup_{u_1, u_2 \in \mathcal{U}} \rho(x, u_1, u_2, a), \, \sup_{u_1, u_2 \in \mathcal{U}} \rho(x, u_1, u_2, a)^{-1} \right\}.   
\end{equation}
\end{lemma}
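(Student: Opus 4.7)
The plan is to rewrite each of the three sensitivity constraints, originally phrased in terms of odds ratios of propensity scores, as a constraint on the density ratio $\mathbb{P}(u \mid x)/\mathbb{P}(u \mid x,a)$. The central algebraic identity I would establish first is a ``Bayes lemma'' connecting these two viewpoints: applying Bayes' rule to $\mathbb{P}(u \mid x,a)$ and then the law of total probability $\mathbb{P}(u \mid x) = \mathbb{P}(u \mid x,a)\mathbb{P}(a \mid x) + \mathbb{P}(u \mid x, A\neq a)(1 - \mathbb{P}(a \mid x))$ yields, after a short manipulation, that $\rho(x,u,a) = \mathbb{P}(u \mid x, A\neq a)/\mathbb{P}(u \mid x, a)$. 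Specialized to binary $A$ with $a=1$ this equals $\mathbb{P}(u \mid x,0)/\mathbb{P}(u \mid x,1)$, which by two applications of Bayes' rule equals $\mathrm{OR}(\pi(x), \pi(x,u))$. This single identity is the engine of all three cases.

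Given that identity, the MSM case is immediate: $1/\Gamma \le \mathrm{OR}(\pi(x),\pi(x,u)) \le \Gamma$ for all $u$ is equivalent to $\max\{\sup_u \rho(x,u,a),\,\sup_u \rho(x,u,a)^{-1}\} \le \Gamma$, giving the claimed functional $\mathcal{D}_{x,a}$. For the $f$-sensitivity model I would just substitute the identity into the defining integral: the measure $\mathbb{P}(u \mid x, A=1)\diff u$ and the integrand $f(\mathrm{OR}(\pi(x),\pi(x,u)))$ both translate cleanly, yielding the two integrals of $f(\rho)$ and $f(\rho^{-1})$ under $\mathbb{P}(u \mid x,a)\diff u$, of which we take the max exactly as in the original definition. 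For Rosenbaum's model I would derive the analogous two-point identity expressing $\mathrm{OR}(\pi(x,u_1),\pi(x,u_2))$ as $\rho(x,u_1,u_2,a)$; one way is to apply the one-point identity twice (to $u_1$ and to $u_2$) and simplify, or to start from Bayes' rule for $\pi(x,u_i)$ directly and collect terms, which produces precisely the stated rational expression. The $\sup$ over $u_1,u_2$ together with the $\max\{\cdot,\cdot^{-1}\}$ then encodes the two-sided bound $1/\Gamma \le \mathrm{OR} \le \Gamma$.

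Finally, in each case I would verify that the resulting $\mathcal{D}_{x,a}$ is a well-defined functional of the pair $(\mathbb{P}(U\mid x), \mathbb{P}(U\mid x,a))$ alone (note that $\mathbb{P}(a\mid x)$ is determined by these two distributions via the identity $\int \mathbb{P}(u\mid x,a) \mathbb{P}(a\mid x)\diff u$ relative to $\mathbb{P}(u\mid x)$, so no extra data enters), which is what the GTSM definition requires. The one place I expect to have to be careful is the Rosenbaum case: the rational expression for $\rho(x,u_1,u_2,a)$ is slightly delicate, and I would want to check that the denominator is positive under Assumption~\ref{ass:causal} (positivity) so that the $\max\{\cdot,\cdot^{-1}\}$ formulation is meaningful, and that the two-point identity genuinely recovers the odds-ratio form rather than an equivalent-up-to-reparameterization object. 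Everything else is bookkeeping.
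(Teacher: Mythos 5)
Your proposal follows essentially the same route as the paper: the engine in both cases is Bayes' theorem, which turns the odds-ratio constraints on propensity scores into constraints on the density ratio $\mathbb{P}(u\mid x)/\mathbb{P}(u\mid x,a)$, yielding the identity $\rho(x,u,a)=\mathrm{OR}(\mathbb{P}(a\mid x),\mathbb{P}(a\mid x,u))$ and then plugging into the three definitions. Your extra intermediate step — first showing $\rho(x,u,a)=\mathbb{P}(u\mid x, A\neq a)/\mathbb{P}(u\mid x,a)$ via the law of total probability, then converting to the odds ratio — is a harmless detour; and for the Rosenbaum case your proposed shortcut of composing the one-point identity twice (giving $\rho(x,u_1,u_2,a)=\rho(x,u_2,a)/\rho(x,u_1,a)$) is a cleaner derivation than the paper's direct algebraic manipulation of the rational expression, and it does check out.

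One small error in your closing paragraph: $\mathbb{P}(a\mid x)$ is \emph{not} determined by the pair $(\mathbb{P}(U\mid x),\mathbb{P}(U\mid x,a))$ — e.g., if those two distributions coincide, $\mathbb{P}(a\mid x)$ can be anything, and $\int\mathbb{P}(u\mid x,a)\diff u=1$ gives no information. The reason this does not matter is different: by the definition of a sensitivity model, every $\mathbb{P}\in\mathcal{M}$ induces the same observational distribution $\mathbb{P}_\mathrm{obs}$, so $\mathbb{P}(a\mid x)=\mathbb{P}_\mathrm{obs}(a\mid x)$ is a fixed constant that the $(x,a)$-indexed functional $\mathcal{D}_{x,a}$ is permitted to carry as a parameter. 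With that correction, the proof goes through.
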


\begin{proof}
We show that all three sensitivity models (MSM, $f$-sensitivity models, and Rosenbaum's sensitivity model) are GTSMs. Recall that the odds ratio is defined as $\mathrm{OR}(a, b) =  \frac{a}{(1 - a)} \frac{(1 - b)}{b}$.

\textbf{MSM:} Using Bayes' theorem, we obtain $\mathbb{P}(u \mid x, a) = \frac{\mathbb{P}(a \mid x, u) \mathbb{P}(u \mid x)}{\mathbb{P}(a \mid x)}$ and therefore
\begin{align}
\rho(x, u, a) &=  \frac{1}{1 - \mathbb{P}(a \mid x)} \left(\frac{\mathbb{P}(u \mid x)}{\mathbb{P}(u \mid x, a)} - \mathbb{P}(a \mid x) \right) \\
&= \frac{1}{1 - \mathbb{P}(a \mid x)} \left(\frac{\mathbb{P}(a \mid x)}{\mathbb{P}(a \mid x, u)} - \mathbb{P}(a \mid x) \right)\\
&= \frac{\mathbb{P}(a \mid x)}{1 - \mathbb{P}(a \mid x)} \left(\frac{1 - \mathbb{P}(a \mid x, u)}{\mathbb{P}(a \mid x, u)}  \right) \\
&= \mathrm{OR}\left(\mathbb{P}(a \mid x), \mathbb{P}(a \mid x, u)\right) .
\end{align}

Hence, $\max\left\{\sup_{u \in \mathcal{U}} \rho(x, u, a), \, \sup_{u \in \mathcal{U}} \rho(x, u, a)^{-1} \right\} \leq \Gamma$ is equivalent to 
\begin{equation}
    \frac{1}{\Gamma} \leq \mathrm{OR}\left(\mathbb{P}(a \mid x), \mathbb{P}(a \mid x, u)\right) \leq \Gamma
\end{equation}
for all $u \in \mathcal{U}$, which reduces to the original MSM defintion for $a=1$.

\textbf{$f$-sensitivity models:} Follows immediately from $\rho(x, u, a) = \mathrm{OR}\left(\mathbb{P}(a \mid x), \mathbb{P}(a \mid x, u)\right)$.

\textbf{Rosenbaum's sensitivity model:} 
We can write 
\begin{align}
\rho(x, u_1, u_2, a) &= \frac{\mathbb{P}(u_1 \mid x, a) \mathbb{P}(u_2 \mid x, a) \mathbb{P}(a \mid x) - \mathbb{P}(u_1 \mid x, a) \mathbb{P}(u_2 \mid x)}{\mathbb{P}(u_1 \mid x, a) \mathbb{P}(u_2 \mid x, a) \mathbb{P}(a \mid x) - \mathbb{P}(u_2 \mid x, a) \mathbb{P}(u_1 \mid x)}\\
&= \frac{\mathbb{P}(a \mid x, u_1) \mathbb{P}(u_1 \mid x)}{\mathbb{P}(a \mid x, u_2) \mathbb{P}(u_2 \mid x)} \left(\frac{\mathbb{P}(a \mid x, u_2) \mathbb{P}(u_2 \mid x) - \mathbb{P}(u_2 \mid x)}{\mathbb{P}(a \mid x, u_1) \mathbb{P}(u_1 \mid x) - \mathbb{P}(u_1 \mid x)} \right) \\
&= \frac{\mathbb{P}(a \mid x, u_1)}{\mathbb{P}(a \mid x, u_2)} \left(\frac{\mathbb{P}(a \mid x, u_2)  - 1}{\mathbb{P}(a \mid x, u_1) - 1} \right) \\
&= \mathrm{OR}\left(\mathbb{P}(a \mid x, u_1), \mathbb{P}(a \mid x, u_2)\right) .
\end{align}

Hence, $\max\left\{\sup_{u_1, u_2 \in \mathcal{U}} \rho(x, u_1, u_2, a), \, \sup_{u_1, u_2 \in \mathcal{U}} \rho(x, u_1, u_2, a)^{-1} \right\} \leq \Gamma$ is equivalent to 
\begin{equation}
    \frac{1}{\Gamma} \leq \mathrm{OR}\left(\mathbb{P}(a \mid x, u_1), \mathbb{P}(a \mid x, u_2)\right)\leq \Gamma
\end{equation}
for all $u_1, u_2 \in \mathcal{U}$, which reduces to the original definition of Rosenbaum's sensitivity model for $a=1$.
\end{proof}

\subsection{Proof of Lemma~\ref{lem:transformation}}

\begin{proof}
We show transformation-invariance separately for all three sensitivity models (MSM, $f$-sensitivity models, and Rosenbaum's sensitivity model).

\textbf{MSM:} Let $\mathcal{D}_{x, a}(\mathbb{P}(U \mid x), \mathbb{P}(U \mid x, a)) = \Gamma$, which implies that implies $\frac{1}{\Gamma} \leq \rho(x, u, a) \leq \Gamma$ for all $u \in \mathcal{U}$. By rearranging terms, we obtain
\begin{equation}\label{eq:proof_lem2_msm_upper}
\mathbb{P}(u \mid x)  \leq \left( \Gamma (1 - \mathbb{P}(a \mid x)) + \mathbb{P}(a \mid x) \right) \, \mathbb{P}(u \mid x, a)
\end{equation}
and
\begin{equation}\label{eq:proof_lem2_msm_lower}
\mathbb{P}(u \mid x)  \geq \left( \frac{1}{\Gamma} (1 - \mathbb{P}(a \mid x)) + \mathbb{P}(a \mid x) \right) \, \mathbb{P}(u \mid x, a).
\end{equation}
Let $t \colon \mathcal{U} \to \widetilde{\mathcal{U}}$ be a transformation of the unobserved confounder. By using Eq.~\eqref{eq:proof_lem2_msm_upper}, we can write
\begin{align}
\rho(x, t(u), a) &=\frac{1}{1 - \mathbb{P}(a \mid x)} \left(\frac{\mathbb{P}(t(u) \mid x)}{\mathbb{P}(t(u) \mid x, a)} - \mathbb{P}(a \mid x) \right) \\
&= \frac{1}{1 - \mathbb{P}(a \mid x)} \left(\frac{ \int \delta(t(u) - t(u^\prime))\mathbb{P}(u^\prime \mid x) \diff u^\prime}{\int \delta(t(u) - t(u^\prime))\mathbb{P}(u^\prime \mid x, a) \diff u^\prime} - \mathbb{P}(a \mid x) \right) \\
& \leq \frac{1}{1 - \mathbb{P}(a \mid x)} \left(\frac{ \int \delta(t(u) - t(u^\prime))\mathbb{P}(u^\prime \mid x, a) \diff u^\prime}{\int \delta(t(u) - t(u^\prime))\mathbb{P}(u^\prime \mid x, a) \diff u^\prime} \right) 
 \, \Gamma \\
&= \Gamma
\end{align}
for all $u \in \mathcal{U}$. Similarly, we can use Eq.~\eqref{eq:proof_lem2_msm_lower} to obtain
\begin{align}
\rho(x, t(u), a)
&= \frac{1}{1 - \mathbb{P}(a \mid x)} \left(\frac{ \int \delta(t(u) - t(u^\prime))\mathbb{P}(u^\prime \mid x) \diff u^\prime}{\int \delta(t(u) - t(u^\prime))\mathbb{P}(u^\prime \mid x, a) \diff u^\prime} - \mathbb{P}(a \mid x) \right) \\
& \geq \frac{1}{1 - \mathbb{P}(a \mid x)} \left(\frac{ \int \delta(t(u) - t(u^\prime))\mathbb{P}(u^\prime \mid x, a) \diff u^\prime}{\int \delta(t(u) - t(u^\prime))\mathbb{P}(u^\prime \mid x, a) \diff u^\prime} \right) 
 \, \frac{1}{\Gamma} \\
&= \frac{1}{\Gamma}
\end{align}
for all $u \in \mathcal{U}$. Hence, 
\begin{equation}
\mathcal{D}_{x,a}(\mathbb{P}(t(U) \mid x), \mathbb{P}(t(U) \mid x, a)) = \max\left\{\sup_{u \in \mathcal{U}} \rho(x, t(u), a), \, \sup_{u \in \mathcal{U}} \rho(x, t(u), a)^{-1} \right\} \leq \Gamma.
\end{equation}

\textbf{$f$-sensitivity models:} This follows from the data compression theorem for $f$-divergences. We refer to \citet{Polyanskiy.2022} for details.

\textbf{Rosenbaum's sensitivity model}: We begin by rewriting
\begin{align}
    \rho(x, u_1, u_2, a) &= \frac{\mathbb{P}(u_1 \mid x, a) \mathbb{P}(u_2 \mid x, a) \mathbb{P}(a \mid x) - \mathbb{P}(u_1 \mid x, a) \mathbb{P}(u_2 \mid x)}{\mathbb{P}(u_1 \mid x, a) \mathbb{P}(u_2 \mid x, a) \mathbb{P}(a \mid x) - \mathbb{P}(u_2 \mid x, a) \mathbb{P}(u_1 \mid x)}\\
    &= \left(\frac{1}{\frac{\mathbb{P}(u_1 \mid x)}{\mathbb{P}(u_1 \mid x, a)} - \mathbb{P}(a \mid x)} \right) \left( \frac{\mathbb{P}(u_2 \mid x)}{\mathbb{P}(u_2 \mid x, a)} - \mathbb{P}(a \mid x)\right)
\end{align}
as a function of density ratios on $\mathcal{U}$. Let now $\mathcal{D}_{x, a}(\mathbb{P}(U \mid x), \mathbb{P}(U \mid x, a)) = \Gamma$. This implies

\begin{equation}\label{eq:proof_lem2_rb_upper}
 \mathbb{P}(u_1 \mid x) \leq \left(\Gamma \left( \frac{ \mathbb{P}(u_2 \mid x)}{ \mathbb{P}(u_2 \mid x, a)}  - \mathbb{P}(a \mid x)\right) + \mathbb{P}(a \mid x)\right) \mathbb{P}(u_1 \mid x, a)  
\end{equation}
and 
\begin{equation}\label{eq:proof_lem2_rb_lower}
 \mathbb{P}(u_1 \mid x) \geq \left(\frac{1}{\Gamma} \left( \frac{ \mathbb{P}(u_2 \mid x)}{ \mathbb{P}(u_2 \mid x, a)}  - \mathbb{P}(a \mid x)\right) + \mathbb{P}(a \mid x)\right) \mathbb{P}(u_1 \mid x, a)
\end{equation}
for all $u_1, u_2 \in \mathcal{U}$. Let $t \colon \mathcal{U} \to \widetilde{\mathcal{U}}$ be a transformation. By using Eq.~\eqref{eq:proof_lem2_rb_upper} and Eq.~\eqref{eq:proof_lem2_rb_lower}, we obtain
\begin{align}
\rho(x, t(u_1), t(u_1), a)
&= \left( \frac{1}{\frac{\int \delta(t(u_1) - t(u^\prime_1))\mathbb{P}(u^\prime_1 \mid x) \diff u^\prime_1}{\int \delta(t(u_1) - t(u^\prime_1))\mathbb{P}(u^\prime_1 \mid x, a) \diff u^\prime_1} - \mathbb{P}(a \mid x)} \right) \\
& \quad \quad \left( \frac{\int \delta(t(u_2) - t(u^\prime_2))\mathbb{P}(u^\prime_2 \mid x) \diff u^\prime_2}{\int \delta(t(u_2) - t(u^\prime_2))\mathbb{P}(u^\prime_2 \mid x, a) \diff u^\prime_2} - \mathbb{P}(a \mid x)\right) \\
&\leq \left( \frac{1}{\frac{1}{\Gamma} \left(\frac{\mathbb{P}(u_2 \mid x)}{\mathbb{P}(u_2 \mid x, a) } - \mathbb{P}(a \mid x) \right)} \right) \Gamma \left( \frac{\mathbb{P}(u_1 \mid x)}{\mathbb{P}(u_1 \mid x, a)} - \mathbb{P}(a \mid x) \right) \\
 &= \frac{\Gamma^2}{\rho(x, u_1, u_1, a)}
\end{align}
for all $u_1, u_2 \in \mathcal{U}$. Hence, 
\begin{equation}
\inf_{u_1, u_2} \rho(x, t(u_1), t(u_1), a) \leq \Gamma.     
\end{equation}
By using analogous arguments, we can also show that 
\begin{equation}
\sup_{u_1, u_2} \rho(x, t(u_1), t(u_1), a) \leq \Gamma,     
\end{equation}
which implies
\begin{equation}
    \mathcal{D}_{x,a}(\mathbb{P}(t(U) \mid x), \mathbb{P}(t(U) \mid x, a)) \leq \Gamma.
\end{equation}
\end{proof}

\subsection{Proof of Theorem~\ref{thrm:main}}

Before stating the formal proof for Theorem~\ref{thrm:main}, we provide a sketch to give an overview of the main ideas and intuition.

\emph{Why is it sufficient to only consider invertible functions $f^\ast_{x, a}: \mathcal{U} \to \mathcal{Y}$, i.e., model $\mathbb{P}^\ast(Y \mid x, a, u) = \delta(Y - f^\ast_{x, a}(u))$  as a Dirac delta distribution?} Here, it is helpful to have a closer look at Fig.~\ref{fig:dist_shift_sketch}: Intervening on the treatment $A$ causes a shift in the latent distribution of $U$, which then leads to a shifted interventional distribution $\mathbb{P}(Y(a) = y \mid x)$. The question is now: How can we obtain an interventional distribution that results in a maximal causal query (for the upper bound)? Let us consider a structural equation of the form $Y = g^\ast_{x, a}(U, \epsilon)$, where $\epsilon$ is some independent noise. Hence, the ``randomness'' (entropy) in $Y$ comes from both $U$ and $\epsilon$, however, the distribution shift only arises through $U$. Intuitively, the interventional distribution should be maximally shifted if $Y$ only depends on the unobserved confounder and not on independent noise, i.e., $g^\ast_{x, a}(U, \epsilon) = f^\ast_{x, a}(U)$ for some invertible function $f^\ast_{x, a}$. One may also think about this as achieving the maximal ``dependence'' (mutual information) between the random variables $U$ and $Y$. Note that any GTSM only restricts the dependence between $U$ and $A$, but not between $U$ and $Y$.

\emph{Why can we fix $\mathbb{P}^\ast(U \mid x, a)$ and $f^\ast_{x, a}$ without losing the ability to achieve the optimum in Eq.~\eqref{eq:alternating_opt}}. The basic idea is as follows: Let $\widetilde{\mathbb{P}}(\widetilde{U} \mid x, a)$ and $\widetilde{f}{x, a}$ be optimal solutions to Eq.~\eqref{eq:alternating_opt} for a potentially different latent variable $\widetilde{U}$. Then, we can define a mapping $t = {f^\ast_{x, a}}^{-1} \circ \widetilde{f}_{x, a} \colon \widetilde{U} \to U$ between latent spaces that transforms $\widetilde{\mathbb{P}}(\widetilde{U} \mid x, a)$ into our fixed $\mathbb{P}^\ast(U \mid x, a)$ (because both $\widetilde{f}_{x, a}$ and $f^\ast_{x, a}$ respect the observational distribution). Furthermore, we can use $t$ to push the optimal shifted distribution $\widetilde{\mathbb{P}}(\widetilde{U} \mid x)$ (under treatment intervention) to the latent variable $U$ (see Eq.~\eqref{eq:proof_defp}). We will show that this is sufficient to obtain a distribution  $\mathbb{P}^\ast$ that induces $\mathbb{P}^\ast(U \mid x, a)$ and $\mathbb{P}^\ast(Y \mid x, u, a) = \delta(Y - f^\ast_{x, a}(u))$ and which satisfies the sensitivity constraints. For the latter property, we require the sensitivity model to be ``invariant'' with respect to the transformation $t$, for which we require our transformation-invariance assumption (Definition~\ref{def:transformation}).

We proceed now with our formal proof of Theorem~\ref{thrm:main}.

\begin{proof}
Without loss of generality, we provide a proof for the upper bound $Q^+_\mathcal{M}(x, a)$. Our arguments work analogously for the lower bound $Q^-_\mathcal{M}(x, a)$. Furthermore, we only show the inequality 
\begin{equation}
    Q^+_\mathcal{M}(x, a) \leq \sup_{\mathbb{P}^\ast \in \mathcal{P}^\ast} Q(x, a, \mathbb{P}^\ast),
\end{equation}
because the other direction (``$\geq$'') holds by definition of $Q^+_\mathcal{M}(x, a)$. Hence, it is enough to show the existence of a sequence of full distributions $(\mathbb{P}_\ell^\ast)_{\ell \in \N}$ with $\mathbb{P}_\ell^\ast \in \mathcal{P}^\ast$ for all $\ell \in \N$ that satisfies $\lim_{{\ell \to \infty}} Q(x, a, \mathbb{P}_\ell^\ast) = Q^+_\mathcal{M}(x, a)$.

To do so, we proceed in three steps: In step~1, we construct a sequence $(\mathbb{P}_\ell^\ast)_{\ell \in \N}$ of full distributions that induce $\mathbb{P}_\ell^\ast(U \mid x, a) = \mathbb{P}^\ast(U \mid x, a)$ and $\mathbb{P}_\ell^\ast(Y \mid x, u, a) = \mathbb{P}^\ast(Y \mid x, u, a) = \delta(Y - f^\ast_{x, a}(u))$ for every $\ell \in \N$. In step~2, we show compatibility with the sensitivity model, i.e., $\mathbb{P}_\ell^\ast \in \mathcal{M}$ for all $\ell \in \N$. Finally, in step~3, we show that $\lim_{{\ell \to \infty}} Q(x, a, \mathbb{P}_\ell^\ast) = Q^+_\mathcal{M}(x, a)$.

\textbf{Step 1:} Let $\widetilde{\mathbb{P}}$ be a full distribution on $\mathcal{X} \times \widetilde{\mathcal{U}} \times \mathcal{A} \times \mathcal{Y}$ for some latent space $\widetilde{\mathcal{U}}$ that is the solution to Eq.~\eqref{eq:partial_identification}. By definition, there exists a sequence $(\widetilde{\mathbb{P}}_\ell)_{\ell \in \N}$ with $\widetilde{\mathbb{P}}_\ell \in \mathcal{M}$ and
$\lim_{{\ell \to \infty}} Q(x, a, \widetilde{\mathbb{P}}_\ell) = Q^+_\mathcal{M}(x, a)$. Let $\widetilde{\mathbb{P}}_\ell(\widetilde{U} \mid x)$ and $\widetilde{\mathbb{P}}_\ell(Y \mid x, u, a)$ be corresponding induced distributions (for fixed $x$, $a$). Without loss of generality, we can assume that $\widetilde{\mathbb{P}}_\ell$ is induced by a structural causal model \citep{Pearl.2009}, so that we can write the conditional outcome distribution with a (not necessarily invertible) functional assignment $Y = \widetilde{f}_{X, A, \ell}(U)$ as a point distribution $\widetilde{\mathbb{P}}_\ell(Y \mid x, u, a) = \delta(Y - \widetilde{f}_{x, a, l}(u))$. Note that we do not explicitly consider exogenous noise because we can always consider this part of the latent space $\widetilde{\mathcal{U}}$. By Eq.~\eqref{eq:int_dist} and Eq.~\eqref{eq:obs_dist} we can write the observed conditional outcome distribution as
\begin{equation}\label{eq:proof_tilde_obs}
\mathbb{P}_\mathrm{obs}(Y \mid x, a) = \widetilde{\mathbb{P}}_\ell(\widetilde{f}_{x, a, \ell}(\widetilde{U}) \mid x, a),  
\end{equation}
and the potential outcome distribution conditioned on $x$ as 
\begin{equation}
\widetilde{\mathbb{P}}_\ell(Y(a) \mid x) = \widetilde{\mathbb{P}}_\ell(\widetilde{f}_{x, a, \ell}(\widetilde{U}) \mid x).
\end{equation}

We now define the sequence $(\mathbb{P}_\ell^\ast)_{\ell \in \N}$. First we define a distribution on $\mathcal{U} \subseteq \R^{d_y}$ via
\begin{equation}\label{eq:proof_defp}
\mathbb{P}_\ell^\ast(U \mid x) = \widetilde{\mathbb{P}}_\ell({f^\ast_{x, a}}^{-1} \left(Y(a) \right)\mid x) = \widetilde{\mathbb{P}}_\ell({f^\ast_{x, a}}^{-1} \left(\widetilde{f}_{x, a, \ell}(\widetilde{U}) \right)\mid x).
\end{equation}

We then define full probability distribution $\mathbb{P}_\ell^\ast$ for the fixed $x$ and $a$ and all $u\in \mathcal{U}$, $y \in \mathcal{Y}$ as
\begin{equation}
    \mathbb{P}_\ell^\ast(x, u, a, y) = \delta \left(f^\ast_{x, a}(u) - y \right) \mathbb{P}^\ast(u \mid x, a) \mathbb{P}_\mathrm{obs}(x, a).
\end{equation}
Finally, we can choose a family of distributions $(\mathbb{P}_\ell^\ast(U \mid x, a^\prime))_{a^\prime \neq a}$ so that $\mathbb{P}_\ell^\ast(U \mid x) = \int \mathbb{P}_\ell^\ast(U \mid x, a) \mathbb{P}_\mathrm{obs}(a \mid x) \diff a$ and define
\begin{equation}
    \mathbb{P}_\ell^\ast(x, u, a^\prime, y) = \delta\left(f^\ast_{x, a^\prime}(u) - y\right) \mathbb{P}_\ell^\ast(u \mid x, a^\prime) \mathbb{P}_\mathrm{obs}(x, a^\prime).
\end{equation}

By definition, $\mathbb{P}_\ell^\ast$ induces the fixed components $\mathbb{P}^\ast(U \mid x, a)$ and $\mathbb{P}^\ast(Y \mid x, u, a) = \delta(Y - f^\ast_{x, a}(u))$, as well as $\mathbb{P}_\ell^\ast(U \mid x)$ from Eq.~\eqref{eq:proof_defp} and the observational data distribution $\mathbb{P}_\mathrm{obs}(X, A, Y)$.

\textbf{Step 2:} We now show that $\mathbb{P}_\ell^\ast$ respects the sensitivity constraints, i.e., satisfies $\mathbb{P}_\ell^\ast \in \mathcal{M}$. It holds that
\begin{align}\label{eq:proof_transform_xa}
\mathbb{P}_\ell^\ast(U \mid x, a) &= {\mathbb{P}}_\ell^\ast({f^\ast_{x, a}}^{-1} \left(f^\ast_{x, a}(U) \right)\mid x, a) \\ & \overset{(1)}{=}{\mathbb{P}}_\mathrm{obs}({f^\ast_{x, a}}^{-1} \left(Y\right)\mid x, a) \\ & \overset{(2)}{=} \widetilde{\mathbb{P}}_\ell({f^\ast_{x, a}}^{-1} \left(\widetilde{f}_{x, a, \ell}(\widetilde{U}) \right)\mid x, a),
\end{align}
where (1) holds due to the data-compatibility assumption on $f^\ast_{x, a}$ and (2) holds due to Eq.~\eqref{eq:proof_tilde_obs}.

We now define a transformation $t \colon \widetilde{\mathcal{U}} \to \mathcal{U}$ via $t = {f^\ast_{x, a}}^{-1} \circ \widetilde{f}_{x, a, \ell}$. We obtain
\begin{align}
  \mathcal{D}_{x, a}(\mathbb{P}_\ell^\ast(U \mid x), \mathbb{P}_\ell^\ast(U \mid x, a)) &\overset{(1)}{=} \mathcal{D}_{x, a}(\widetilde{\mathbb{P}}_\ell(t(\widetilde{U}) \mid x), \widetilde{\mathbb{P}}_\ell^\ast(t(\widetilde{U}) \mid x, a)) \\ & \overset{(2)}{\leq} \mathcal{D}_{x, a}(\widetilde{\mathbb{P}}_\ell(\widetilde{U} \mid x), \widetilde{\mathbb{P}}_\ell^\ast(\widetilde{U} \mid x, a))\\ & \overset{(3)}{\leq} \Gamma,
\end{align}
where (1) holds due to Eq.~\eqref{eq:proof_defp} and Eq.~\eqref{eq:proof_transform_xa}, (2) holds due to the tranformation-invariance property of $\mathcal{M}$, and (3) holds because $\widetilde{\mathbb{P}}_\ell \in \mathcal{M}$.
Hence, $\mathbb{P}_\ell^\ast \in \mathcal{M}$.

\textbf{Step 3:} We show now that $\lim_{{\ell \to \infty}} Q(x, a, \mathbb{P}_\ell^\ast) = Q^+_\mathcal{M}(x, a)$, which completes our proof.
By Eq.~\eqref{eq:proof_defp}, it holds that
\begin{equation}
\mathbb{P}^\ast_\ell(Y(a) \mid x) = \mathbb{P}_\ell^\ast(f^\ast_{x, a}(U) \mid x)  = \widetilde{\mathbb{P}}_\ell(Y(a) \mid x),
\end{equation}
which means that potential outcome distributions conditioned on $x$ coincide for $\mathbb{P}^\ast_\ell$ and $\widetilde{\mathbb{P}}_\ell$. It follows that
\begin{equation}
    Q(x, a, \mathbb{P}_\ell^\ast) = \mathcal{F}\left( \mathbb{P}^\ast_\ell(Y(a) \mid x)\right) = \mathcal{F}\left( \widetilde{\mathbb{P}}_\ell(Y(a) \mid x)\right) \xrightarrow[\ell \to \infty]{}  Q^+_\mathcal{M}(x, a).
\end{equation}

\end{proof}

\clearpage

\section{Further sensitivity models}\label{app:sensitivity}

In the following, we list additional sensitivity models that can be written as GTSMs and thus can be used with \frameworkname.

\textbf{Continuous marginal sensitivity model (CMSM)}: The CMSM has been proposed by \citet{Jesson.2022} and \citet{Bonvini.2022}. It is defined via
\begin{equation}
\frac{1}{\Gamma} \leq  \frac{\mathbb{P}(a\mid x)}{\mathbb{P}(a \mid x, u)} \leq \Gamma
\end{equation}
for all $x \in \mathcal{X}$, $u \in \mathcal{U}$, and $a \in \mathcal{A}$. The CMSM can be written as a CMSM with sensitivity parameter $\Gamma$ by defining 
\begin{equation}
\mathcal{D}_{x,a}(\mathbb{P}(u \mid x), \mathbb{P}(u \mid x, a)) = \max\left\{\sup_{u \in \mathcal{U}} \frac{\mathbb{P}(u \mid x)}{\mathbb{P}(u \mid x, a)}, \, \sup_{u \in \mathcal{U}} \frac{\mathbb{P}(u \mid x, a)}{\mathbb{P}(u \mid x)} \right\}.
\end{equation}
This directly follows by applying Bayes' theorem to $\mathbb{P}(u \mid x, a) = \frac{\mathbb{P}(a \mid x, u) \mathbb{P}(u \mid x)}{\mathbb{P}(a \mid x)}$.

\textbf{Continuous $f$-sensitivity models}: Motivated by the CMSM, we can define $f$-sensitivity models for continuous treatments via
\begin{equation}
    \max\left\{ \int_{\mathcal{U}} f\left(\frac{\mathbb{P}(a\mid x)}{\mathbb{P}(a \mid x, u)}\right) \mathbb{P}(u \mid x, a) \diff u, \,  \int_{\mathcal{U}} f\left(\frac{\mathbb{P}(a\mid x, u)}{\mathbb{P}(a \mid x)}\right) \mathbb{P}(u \mid x, a) \diff u\right\} \leq \Gamma
\end{equation}
for all $x \in \mathcal{X}$ and $a \in \mathcal{A}$.
By using Bayes' theorem, we can write any continuous $f$-sensitivity model as a GTSM by defining
\begin{align}
\mathcal{D}_{x, a}(\mathbb{P}(u \mid x), \mathbb{P}(u \mid x, a)) &= \max\left\{\int_{\mathcal{U}} f\left(\frac{\mathbb{P}(u \mid x)}{\mathbb{P}(u \mid x, a)}\right) \mathbb{P}(u \mid x, a) \diff u, 
 \right. \\ & \quad \quad \quad \quad \left.\int_{\mathcal{U}} f\left(\frac{\mathbb{P}(u \mid x, a)}{\mathbb{P}(u \mid x)}\right) \mathbb{P}(u \mid x, a) \diff u\right\}.
\end{align}

\textbf{Weighted marginal sensitivity models:} \citet{Frauen.2023c} proposed a weighted version of the MSM, defined via
\begin{equation}
    \frac{1}{(1 - \Gamma) q(x, a) + \Gamma} \leq \frac{\mathbb{P}(u \mid x, a)}{\mathbb{P}(u \mid x)}  \leq \frac{1}{(1 - \Gamma^{-1}) q(x, a) + \Gamma^{-1}},
\end{equation}
where $q(x, a)$ is a weighting function that incorporates domain knowledge about the strength of unobserved confounding. By using similar arguments as in the proof of Lemma~\ref{lem:gtsm}, we can write the weighted MSM as a GTSM by defining
\begin{equation}
\mathcal{D}_{x,a}(\mathbb{P}(U \mid x), \mathbb{P}(U \mid x, a)) = \max\left\{\sup_{u \in \mathcal{U}} \rho(x, u, a), \, \sup_{u \in \mathcal{U}} \rho(x, u, a)^{-1} \right\},
\end{equation}
where 
\begin{equation}
\rho(x, u, a) = \frac{1}{1 - q(x, a)} \left(\frac{\mathbb{P}(u \mid x)}{\mathbb{P}(u \mid x, a)} - q(x, a) \right).    
\end{equation}

\clearpage

\section{Query averages and differences}\label{app:avg_diff}

Here, we show that we can use our bounds $Q^+_\mathcal{M}(x, a)$ and $Q^-_\mathcal{M}(x, a)$ to obtain sharp bounds for averages and differences of causal queries. We follow established literature on causal sensitivity analysis \citep{Dorn.2022, Dorn.2022b, Frauen.2023c}.

\textbf{Averages:} We are interested in the sharp upper bound for the \emph{average causal query}
\begin{equation}
\Bar{Q}_\mathcal{M}(a, \mathbb{P}) = \int_\mathcal{X} Q(x, a, \mathbb{P}) \, \mathbb{P}_\mathrm{obs}(x) \diff x.
\end{equation}
An example is the average potential outcome $\E[Y(a)]$, which can be obtained by averaging conditional potential outcomes via $\E[Y(a)] = \int \E(Y(a) \mid x) \, \mathbb{P}_\mathrm{obs}(x) \diff x$. We can obtain upper bounds via
\begin{equation}\label{eq:bounds_avg}
\Bar{Q}^+_\mathcal{M}(a) = \sup_{\mathbb{P} \in \mathcal{M}} \int_\mathcal{X} Q(x, a, \mathbb{P}) \, \mathbb{P}_\mathrm{obs}(x) \diff x = \int_\mathcal{X} Q^+_\mathcal{M}(x, a) \, \mathbb{P}_\mathrm{obs}(x) \diff x,
\end{equation}
and lower bounds via
\begin{equation}
 \Bar{Q}^-_\mathcal{M}(a) = \inf_{\mathbb{P} \in \mathcal{M}} \int_\mathcal{X} Q(x, a, \mathbb{P}) \, \mathbb{P}_\mathrm{obs}(x) \diff x  =  \int_\mathcal{X} Q^-_\mathcal{M}(x, a) \, \mathbb{P}_\mathrm{obs}(x) \diff x,
\end{equation}
whenever we can interchange the supremum/ infimum and the integral. That is, bounding the averaged causal query $\Bar{Q}_\mathcal{M}(a, \mathbb{P})$ reduces to averaging the bounds $Q^+_\mathcal{M}(x, a)$ and $Q^-_\mathcal{M}(x, a)$.

\textbf{Differences:} For two different treatment values $a_1, a_2 \in \mathcal{A}$, we are interested in the \emph{difference of causal queries}
\begin{equation}
    Q(x, a_1, \mathbb{P}) - Q(x, a_2, \mathbb{P}).
\end{equation}
An example is the conditional average treatment effect $\E[Y(1) \mid x] - \E[Y(0) \mid x]$. We can obtain an upper bound via
\begin{align}\label{eq:bounds_diff}
Q^+_\mathcal{M}(x, a_1, a_2) &=\sup_{\mathbb{P} \in \mathcal{M}} \left(  Q(x, a_1, \mathbb{P}) - Q(x, a_2, \mathbb{P}) \right) \\
& \leq \sup_{\mathbb{P} \in \mathcal{M}} Q(x, a_1, \mathbb{P}) - \inf_{\mathbb{P} \in \mathcal{M}} Q(x, a_2, \mathbb{P}) \\
&= Q^+_\mathcal{M}(x, a_1)  - Q^-_\mathcal{M}(x, a_2).
\end{align}
Similarly, a lower bound is given by
\begin{equation}
Q^-_\mathcal{M}(x, a_1, a_2) \geq  Q^-_\mathcal{M}(x, a_1)  - Q^+_\mathcal{M}(x, a_2).
\end{equation}
It has been shown that these bounds are even sharp for some sensitivity models such as the MSM, i.e., attain equality \citep{Dorn.2022}.

\clearpage

\section{Training details for \frameworkname}\label{app:learning_alg}

In this section, we provide details regarding the training of \frameworkname, in particular, the Monte Carlo estimates for Stage~2 and the full learning algorithm.

\subsection{Monte Carlo estimates of stage 2 losses and sensitivity constraints}

In the following, we assume that we obtained samples $\widetilde{u} = (\widetilde{u}^{(j)}_{x, a})_{j=1}^k \overset{\text{i.i.d.}}{\sim} \mathcal{N}(0_{d_y}, I_{d_y})$ and $\xi = (\xi^{(j)}_{x, a})_{j=1}^k \overset{\text{i.i.d.}}{\sim} \mathrm{Bernoulli}(\mathbb{P}_\mathrm{obs}(a \mid x))$.  

\subsubsection{Stage 2 losses}

Here, we provide our estimators $\hat{\mathcal{L}}_2(\eta, \widetilde{u}, \xi)$ of the Stage~2 loss $\mathcal{L}_2(\eta)$. We consider three different causal queries: (i)~expectations, (ii)~set probabilities, and (iii)~quantiles.

\textbf{Expectations:} Expectations correspond to setting $\mathcal{F}(\mathbb{P}) = \mathbb{E}[X]$. Then, we can estimate our Stage~2 loss via the empirical mean, i.e.,
\begin{equation}
    \hat{\mathcal{L}}_2(\eta, \widetilde{u}, \xi) = \frac{1}{k}  \sum_{i=1}^n \sum_{j=1}^k f^\ast_{g^\ast_{\theta_\mathrm{opt}}(x_i, a_i)}\left( (1 - \xi_{x_i, a_i}^{(j)}) \widetilde{f}_{\widetilde{g}_\eta(x_i, a_i)}(\widetilde{u}_{x_i, a_i}^{(j)}) +  \xi_{x_i, a_i}^{(j)} \widetilde{u}_{x_i, a_i}^{(j)}\right).
\end{equation}

\textbf{Set probabilities:} Here we consider queries of the form $\mathcal{F}(\mathbb{P}) = \mathbb{P}(X \in \mathcal{S})$ for some set $S \subseteq \mathcal{Y}$. We first define the log-likelihood
\begin{equation}
\begin{split}
\ell\left(\eta, \widetilde{u}_{x_i, a_i}^{(j)}, \xi_{x_i, a_i}^{(j)}\right) &= \mathbb{P} \left({f^\ast_{g^\ast_{\theta_\mathrm{opt}}(x_i, a_i)}}\left( (1 - \xi_{x_i, a_i}) \widetilde{f}_{\widetilde{g}_\eta(x_i, a_i)}(\widetilde{U}) +  \xi_{x_i, a_i} \widetilde{U}\right) \right. = \\ 
& \quad \quad \quad \left. {f^\ast_{g^\ast_{\theta_\mathrm{opt}}(x_i, a_i)}}\left((1 - \xi_{x_i, a_i}^{(j)}) \widetilde{f}_{\widetilde{g}_\eta(x_i, a_i)}(\widetilde{u}_{x_i, a_i}^{(j)}) +  \xi_{x_i, a_i}^{(j)} \widetilde{u}_{x_i, a_i}^{(j)} \right)\right),
\end{split}
\end{equation}
which corresponds to the log-likelihood of the shifted distribution (under Stage~2) at the point that is obtained from plugging the Monte Carlo samples $\widetilde{u}_{x_i, a_i}^{(j)}$ and $\xi_{x_i, a_i}^{(j)}$ into the CNFs. We then optimize Stage~2 by maximizing this log-likelihood only at points in $\mathcal{S}$. That is, the corresponding Monte Carlo estimator of the Stage~2 loss is
\begin{equation}
\label{eq_app:monte_carlo_set}
\begin{split}
    \hat{\mathcal{L}}_2(\eta, \widetilde{u}, \xi) &= \sum_{i=1}^n \sum_{j=1}^k \ell\left(\eta, \widetilde{u}_{x_i, a_i}^{(j)}, \xi_{x_i, a_i}^{(j)}\right) \\
    & \quad \mathbbm{1} \left\{{f^\ast_{g^\ast_{\theta_\mathrm{opt}}(x_i, a_i)}}\left((1 - \xi_{x_i, a_i}^{(j)}) \widetilde{f}_{\widetilde{g}_\eta(x_i, a_i)}(\widetilde{u}_{x_i, a_i}^{(j)}) +  \xi_{x_i, a_i}^{(j)} \widetilde{u}_{x_i, a_i}^{(j)}\right) \in \mathcal{S} \right\} .
\end{split}
\end{equation}
Here, we only backpropagate through the log-likelihood to obtain informative (non-zero) gradients.

\textbf{Quantiles:} We consider quantiles of the form $\mathcal{F}(\mathbb{P}) = F_X^{-1}(q)$, where $F_X$ is the c.d.f. corresponding to $\mathbb{P}$ and $q \in (0, 1)$. For this, we can use the same Stage~2 loss as in Eq.~\eqref{eq_app:monte_carlo_set} by defining the set $\mathcal{S} = \left\{y \in \mathcal{Y} \mid y \leq \hat{F}_{ij}^{-1}(q)\right\}$ where $\hat{F}_{ij}^{-1}$ is empirical c.d.f. corresponding to $\left\{ {f^\ast_{g^\ast_{\theta_\mathrm{opt}}(x_i, a_i)}}\left((1 - \xi_{x_i, a_i}^{(j)}) \widetilde{f}_{\widetilde{g}_\eta(x_i, a_i)}(\widetilde{u}_{x_i, a_i}^{(j)}) +  \xi_{x_i, a_i}^{(j)} \widetilde{u}_{x_i, a_i}^{(j)} \right)\right\}_{j=1}^k$.

\subsubsection{Sensitivity constraints}

Here, we provide our estimators $\hat{\mathcal{D}}_{x, a}(\eta, \widetilde{u})$ of the sensitivity constraint $\mathcal{D}_{x, a}(\mathbb{P}(U \mid x), \mathbb{P}(U \mid x, a))$. We consider the three sensitivity models from the main paper: (i)~MSM, (ii)~$f$-sensitivity models, and (iii)~Rosenbaum's sensitivity model.

\textbf{MSM:} We define
\begin{equation}
    \hat{\rho}(x, u, a, \eta) = \frac{1}{1 - \mathbb{P}(a \mid x)} \left(\frac{ \mathbb{P} \left((1 - \xi_{x, a}) \widetilde{f}_{\widetilde{g}_\eta(x, a)}(\widetilde{U}) +  \xi_{x, a} \mathbb{P}(\widetilde{U} = u) \right)}{\mathbb{P}(\widetilde{U} = u)} - \mathbb{P}(a \mid x) \right).
\end{equation}
Then, our estimator for the MSM constraint is
\begin{equation}
   \hat{\mathcal{D}}_{x, a}(\eta, \widetilde{u}) =  \max\left\{\max_{u \in \widetilde{u}} \hat{\rho}(x, u, a, \eta), \, \max_{u \in \widetilde{u}} \hat{\rho}(x, u, a, \eta)^{-1} \right\}.
\end{equation}

\textbf{$f$-sensitivity models:} Our estimator for the $f$-sensitivity constraint is
\begin{equation}
\hat{\mathcal{D}}_{x, a}(\eta, \widetilde{u}) = \max\left\{\frac{1}{k}\sum_{j=1}^k f(\hat{\rho}(x, \widetilde{u}^{(j)}_{x, a}, a, \eta)),  \, \frac{1}{k}\sum_{j=1}^k f(\hat{\rho}(x, \widetilde{u}^{(j)}_{x, a}, a, \eta)^{-1}) \right\}.
\end{equation}

\textbf{Rosenbaum's sensitivity model:} We define
\begin{align}
    \hat{\rho}(x, u_1, u_2, a, \eta) &= \frac{\mathbb{P}(\widetilde{U} = u_1)}{\mathbb{P}(\widetilde{U} = u_2)}\left(\frac{\mathbb{P}(\widetilde{U} = u_2) \mathbb{P}(a \mid x) - \mathbb{P} \left((1 - \xi_{x, a}) \widetilde{f}_{\widetilde{g}_\eta(x, a)}(\widetilde{U}) +  \xi_{x, a} \mathbb{P}(\widetilde{U} = u_2) \right)}{\mathbb{P}(\widetilde{U} = u_1) \mathbb{P}(a \mid x) - \mathbb{P} \left((1 - \xi_{x, a}) \widetilde{f}_{\widetilde{g}_\eta(x, a)}(\widetilde{U}) +  \xi_{x, a} \mathbb{P}(\widetilde{U} = u_1) \right)} \right).
\end{align}
Then, our estimator is
\begin{equation}
   \hat{\mathcal{D}}_{x, a}(\eta, \widetilde{u}) =  \max\left\{\max_{u_1, u_2 \in \widetilde{u}} \hat{\rho}(x, u_1, u_2, a, \eta), \, \max_{u_1, u_2 \in \widetilde{u}} \hat{\rho}(x, u_1, u_2, a, \eta)^{-1} \right\}.
\end{equation}

\subsection{Full learning algorithm}

Our full learning algorithm for Stage~1 and Stage~2 is shown in Algorithm~\ref{alg:learning_full}. For Stage~2, we use our Monte-Carlo estimators described in the previous section in combination with the augmented lagrangian method to incorporate the sensitivity constraints. For details regarding the augmented lagrangian method, we refer to \citet{Nocedal.2006}, chapter 17.

\textbf{Reusability:} Using CNFs instead of unconditional normalizing flows allows us to compute bounds $Q^+_\mathcal{M}(x, a)$ and $Q^-_\mathcal{M}(x, a)$ without the need to retrain our model for different $x \in \mathcal{X}$ and $a \in \mathcal{A}$. In particular, we can simultaneously compute bounds for averaged queries or differences \emph{without} retraining (see Appendix~\ref{app:avg_diff}). Furthermore, Stage~1 is independent of the sensitivity model, which means that we can reuse our fitted Stage~1 CNF for different sensitivity models and sensitivity parameters $\Gamma$, and only need to retrain the Stage~2 CNF. 

\textbf{Analytical potential outcome density:} Once our model is trained, we can not only compute the bounds via sampling but also the analytical form (by using the density transformation formula) of the potential outcome density that gives rise to that bound. Fig.~\ref{fig:real}, shows an example. This makes it possible to perform sensitivity analysis for the potential outcome density itself, i.e., analyzing the ``distribution shift due to intervention".

\begin{algorithm}[H]
\DontPrintSemicolon
\caption{Full learning algorithm for \frameworkname}
\label{alg:learning_full}
\SetKwInOut{Input}{Input}
\SetKwInOut{Output}{Output}
\Input{~causal query $Q(x, a)$, GTSM $\mathcal{M}$, and obs. dataset $\mathcal{D}$ $\mathbb{P}_\mathrm{obs}$ \\ epoch numbers $n_0$, $n_1$, $n_2$; batch size $n_b$; learning rates $\gamma_0$, $\gamma_1$, and $\alpha > 1$ }
\Output{learned parameters $\theta_\mathrm{opt}$ and $\eta_\mathrm{opt}$ of Stage~1 and Stage~2}
\tcp{Stage 1}
$\mathbb{P}^\ast(U \mid x, a) \gets$ fixed probability distribution on $\mathcal{U} \subseteq \R^{d_u = d_y}$\;
Initialize $\theta^{(1)}$ and $U \sim \mathcal{N}(0_{d_y}, I_{d_y})$\;
\For{$i \in \{1, \dots, n_0\}$}{
$(x, a, y) \gets$ batch of size $n_\mathrm{b}$\;
$\mathcal{L}_1(\theta) \gets  \sum_{(x, a, y)} \log \mathbb{P} ({f^\ast_{g^\ast_\theta(x, a)}}(U) = y)$\;
$\theta^{(i+1)} \gets$ optimization step of $\mathcal{L}_1(\theta^{(i)})$ w.r.t. $\theta^{(i)}$ with learning rate $\gamma_0$\;
}
$\theta_\mathrm{opt} \gets \theta^{(n_0)}$\;
\tcp{Stage 2}
Initialize $\eta^{(1)}$, $\lambda^{(1)}$, and $\mu^{(1)}$\;
\For{$i \in \{1, \dots, n_1\}$}{
\For{$\ell \in \{1, \dots, n_2\}$}{
$\eta^{(i)}_{1} \gets \eta^{(i)}$\;
$(x, a) \gets$ batch of size $n_\mathrm{b}$\;
$\widetilde{u} \gets (\widetilde{u}^{(j)}_{x, a})_{j=1}^k \overset{i.i.d.}{\sim} \mathcal{N}(0_{d_y}, I_{d_y})$\;
$\xi \gets(\xi^{(j)}_{x, a})_{j=1}^k \overset{i.i.d.}{\sim} \mathrm{Bernoulli}(\mathbb{P}_\mathrm{obs}(a \mid x))$\;
$s_{x, a}(\eta) \gets \Gamma - \hat{\mathcal{D}}_{x, a}(\eta, \widetilde{u})$\;
$\mathcal{L}(\eta, \lambda, \mu) \gets \hat{\mathcal{L}}_2(\eta, \widetilde{u}, \xi) - \sum_{(x, a)} \lambda_{x, a} s_{x, a}(\eta) + \frac{\mu}{2} s_{x, a}^2(\eta) $\;
$\eta^{(i)}_{\ell+1} \gets$ optimization step of $\mathcal{L}(\eta^{(i)}_{\ell}, \lambda^{(i)}, \mu^{(i)})$ w.r.t. $\eta^{(i)}_{\ell}$ with learning rate $\gamma_1$\;
}
$\eta^{(i+1)} \gets \eta^{(i)}_{n_2}$\;
$\lambda^{(i+1)}_{x, a} \gets \max\left\{0, \lambda^{(i)}_{x, a} - \mu^{(i)} s_{x, a}(\eta^{(i+1)})\right\}$\;
$\mu^{(i+1)} \gets \alpha \mu^{(i)}$
}
$\eta_\mathrm{opt} \gets \eta^{(n_1)}$\;
\end{algorithm}

\subsection{Further discussion of our learning algorithm}

\textbf{Non-neural alternatives:}
Note that our two-stage procedure is agnostic to the estimators used, which, in principle, allows for non-neural instantiations. However, we believe that our neural instantiation (\frameworkname) offers several advantages over possible non-neural alternatives: 
\begin{enumerate}
    \item Solving Stage 1: Conditional normalizing flows (CNFs) are a natural choice for Stage~1 because they are designed to learn an invertible function $f^\ast_{x, a} \colon \mathcal{U} \to \mathcal{Y}$ that satisfies $\mathbb{P}_\mathrm{obs}(Y \mid x, a) =  \mathbb{P}^\ast(f^\ast_{x, a}(U) \mid x, a)$. In particular, CNFs allow for inverting $f^\ast_{x, a}$ analytically, which enables tractable optimization of the log-likelihood (see Appendix~\ref{app:implementation}). In principle, $f^\ast_{x, a}$ could also be obtained by estimating the conditional c.d.f. $\hat{F}(y \mid x, a)$ using some arbitrary estimator and then leveraging the inverse transform sampling theorem, which states that we can choose $f^\ast_{x, a} = \hat{F}^{-1}(\cdot \mid, x, a)$ whenever we fix $\mathbb{P}^\ast(U \mid x, a)$ to be uniform. However, this approach only works for one-dimensional $Y$ and requires inverting the estimated $\hat{F}(y \mid x, a)$ numerically.
    \item Solving Stage 2: Stage 2 requires to optimize the causal query $\mathcal{F} \left(\mathbb{P}(f^\ast_{x, a}(U) \mid x) \right)$ over a latent distribution $\mathbb{P}(U \mid x)$, where $f^\ast_{x, a}$ is learned in Stage 1. In \frameworkname, we achieve this by fitting a second CNF in the latent space $\mathcal{U}$, which we then concatenate with the CNF from Stage~1 to backpropagate through both CNFs. Standard density estimators are not applicable in Stage~2 because we fit a density in the latent space to optimize a causal query that is dependent on Stage~1, and \underline{not} a standard log-likelihood. While there may exist non-neural alternatives to solve the optimization in Stage 2, this goes beyond the scope of our paper, and we leave this for future work.
    \item Analytical interventional density: Using NFs in both Stages 1 and 2 enables us to obtain an analytical interventional once NeuralCSA is fitted. Hence, we can perform sensitivity analysis for the whole interventional density (see Fig.~\ref{fig:semi_synthetic} and \ref{fig:real}), without the need for Monte Carlo approximations through sampling.
    \item Universal density approximation: CNFs are universal density approximators, which means that we can account for complex (e.g., multi-modal, skewed) observational distributions.
\end{enumerate}

\textbf{Complexity of \frameworkname compared to closed-form solutions:} Stage 1 requires fitting a CNF to estimate the observational distribution $\mathbb{P}_\mathrm{obs}(Y \mid x, a)$. This step is also necessary for closed-form bounds (e.g., for the MSM), where such closed-form bounds depend on the observational distribution \citep{Frauen.2023c}. This renders the complexity in terms of implementation choices equivalent to Stage~1. 

In Stage 2, closed-form solutions under the MSM allow for computing bounds directly using the observational distribution, \frameworkname fits an additional NF that is training using Algorithm 1. Hence, additional hyperparameters are the hyperparameters of the second NF as well as the learning rates of the augmented Lagrangian method (used to incorporate the sensitivity constraint). Hence, we recommend using \frameworkname as a method for causal sensitivity analysis whenever bounds are not analytically tractable.  In our experiments, we observed that the training of NeuralCSA was very stable, as indicated by a low variance over different runs (see, e.g., Fig.~\ref{fig:simulation_msm}). 

\textbf{Existence of a global optimum:} A sufficient condition for the existence of a global solution in Eq.~\eqref{eq:alternating_opt} is the continuity of the objective/causal query as well as the compactness of the constraint set. Continuity holds for many common causal queries such as the expectation. The compactness of the constraint set depends on the properties functional $\mathcal{D}_{x, a}$, i.e., the choice of the sensitivity model. The existence of global solutions has been shown for many sensitivity models from the literature, e.g., MSM \citep{Dorn.2022b} and $f$-sensitivity models \citep{Jin.2022}. Note that, in Theorem~\ref{thrm:main}, we do not assume the existence of a global solution. In principle, our two-stage procedure is valid even if a global solution to Eq. (5) does not exist. In this case, we can apply our Stage~2 learning algorithm (Algorithm~\ref{alg:learning_full}) until convergence and obtain an approximation of the desired bound, even if it is not contained in the constraint set.

\clearpage

\section{Details on implementation and hyperparameter tuning}\label{app:implementation}

\textbf{Stage 1 CNF}: We use a conditional normalizing flows (CNF) \citep{Winkler.2019} for stage 1 of \frameworkname. Normalizing flows (NFs) model a distribution $\mathbb{P}(Y)$ of a target variable $Y$ by transforming a simple base distribution $\mathbb{P}(U)$ (e.g., standard normal) of a latent variable $U$ through an invertible transformation $Y = f_{\hat{\theta}}(U)$, where $\hat{\theta}$ denotes parameters \citep{Rezende.2015}. In order to estimate the  \emph{conditional} distributions $\mathbb{P}(Y \mid x, a)$, CNFs define the parameters $\hat{\theta}$ as an output of a \emph{hyper network} $\hat{\theta} = g_\theta(x, a)$ with learnable parameters $\theta$. The conditional log-likelihood can be written analytically as
\begin{equation}
 \log \left( \mathbb{P} (f_{g_\theta(x, a)}(U) = y) \right) 
 \overset{(\ast)}{=} \log \left(\mathbb{P} \left(U = f_{g_\theta(x, a)}^{-1}(y)\right) \right) + \log\left(\mathrm{det} \left( \frac{\mathrm{d}}{\mathrm{d} y} f_{g_\theta(x, a)}^{-1}(y)\right)\right),
\end{equation}
where $(\ast)$ follows from the change-of-variables theorem for invertible transformations.

\textbf{Stage 2 CNF}: As in Stage 1, we use a CNF that transforms $U = \widetilde{f}_{\hat{\eta}}(\widetilde{U})$, where $\hat{\eta} = \widetilde{g}_\eta(x, a)$ with learnable parameters $\eta$. The conditional log-likelihood can be expressed analytically via
\begin{equation}
 \log \left( \mathbb{P} (\widetilde{f}_{\widetilde{g}_\theta(x, a)}(\widetilde{U}) = u) \right) = \log \left(\mathbb{P} \left(\widetilde{U} = \widetilde{f}_{\widetilde{g}_\theta(x, a)}^{-1}(u)\right) \right) + \log\left(\mathrm{det} \left( \frac{\mathrm{d}}{\mathrm{d} u} \widetilde{f}_{\widetilde{g}_\theta(x, a)}^{-1}(u)\right)\right).   
\end{equation}

In our implementation, we use autoregressive neural spline flows. That is, we model the invertible transformation $f_{\theta}$ via a spline flow as described in \cite{Dolatabadi.2020}. We use an autoregressive neural network for the hypernetwork $g_\eta(\mathbf{x}, \mathbf{m}, \mathbf{a})$ with 2 hidden layers, ReLU activation functions, and linear output. For training, we use the Adam optimizer \citep{Kingma.2015}.

\textbf{Propensity scores:} The estimation of the propensity scores $\mathbb{P}(a \mid \mathbf{x})$ is a standard binary classification problem. We use feed-forward neural networks with 3 hidden layers, ReLU activation functions, and softmax output. For training, we minimize the standard cross-entropy loss by using the Adam optimizer \citep{Kingma.2015}.

\textbf{Hyperparameter tuning:}
We perform hyperparameter tuning for our propensity score models and Stage~1 CNFs. The tunable parameters and search ranges are shown in Table~\ref{tab:hyper}. Then, we use the same optimally trained propensity score models and Stage~1 CNFs networks for all Stage~2 models and closed-form solutions (in Fig.~\ref{fig:simulation_msm}). For the Stage~2 CNFs, we choose hyperparameters that lead to a stable convergence of Alg.~\ref{alg:learning_full}, while ensuring that the sensitivity constraints are satisfied. For reproducibility purposes, we report the selected hyperparameters as \emph{.yaml} files.\footnote{Code is available at \href{https://github.com/DennisFrauen/NeuralCSA}{https://github.com/DennisFrauen/NeuralCSA}.}

\begin{table}[h]
\caption{Hyperparameter tuning details.}
\centering
\label{tab:hyper}
\footnotesize
\begin{tabular}{lll}
\toprule
\textsc{Model} & \textsc{Tunable parameters} & \textsc{Search range} \\
\midrule
Stage 1 CNF &Epochs & $50$ \\
&Batch size &$32$, $64$, $128$  \\
 & Learning rate & $0.0005$, $0.001$, $0.005$\\
 & Hidden layer size (hyper network) & $5$, $10$, $20$, $30$\\
  & Number of spline bins & $2$, $4$, $8$ \\
 
\midrule
Propensity network & Epochs & $30$ \\
&Batch size &$32$, $64$, $128$  \\
& Learning rate & $0.0005$, $0.001$, $0.005$ \\
 & Hidden layer size  & $5$, $10$, $20$, $30$ \\
  & Dropout probability & $0$, $0.1$\\
\bottomrule
\end{tabular}
\end{table}

\clearpage

\section{Details regarding datasets and experiments}\label{app:details_exp}

We provide details regarding the datasets we use in our experimental evaluation in Sec.~\ref{sec:experiments}.

\subsection{Synthetic data}

\textbf{Binary treatment:} We simulate an observed confounder $X \sim \mathrm{Uniform}[-1, 1]$ and define the observed propensity score as $\pi(x) = \mathbb{P}_\mathrm{obs}(A=1 \mid x) = 0.25 + 0.5 \, \sigma(3x)$, where $\sigma(\cdot)$ denotes the sigmoid function. Then, we simulate an unobserved confounder
\begin{equation}\label{eq:app_sim_datacomp}
U \mid X = x \sim \textrm{Bernoulli}\left(p=\frac{(\Gamma - 1) \pi(x) + 1}{\Gamma + 1} \right) 
\end{equation}
and a binary treatment
\begin{equation}
A= 1 \mid X = x, U = u \sim \textrm{Bernoulli}\left(p=u \pi(x) s^{+}(x, a) + (1 - u) \pi(x) s^{-}(x, a) \right),
\end{equation}
where $s^{+}(x, a) = \frac{1}{(1 - \Gamma^{-1}) \pi(x) + \Gamma^{-1}}$ and $s^{-}(x, a) = \frac{1}{(1 - \Gamma) \pi(x) + \Gamma}$.

Finally, we simulate a continuous outcome 
\begin{equation}
Y = (2  A - 1) X + (2 A - 1) - 2  \sin(2  (2  A - 1)  X) - 2  (2  U - 1)  (1 + 0.5  X) + \varepsilon,
\end{equation}
where $\varepsilon \sim \mathcal{N}(0, 1)$.

The data-generating process is constructed so that $\frac{\mathbb{P}(A = 1 \mid x, u)}{\pi(x)} = u s^{+}(x, a) + (1-u)s^{-}(x, a)$ or, equivalently, that $\mathrm{OR}(x, u) = u \Gamma + (1 - u) \Gamma^{-1}$. Hence, the full distribution follows an MSM with sensitivity parameter $\Gamma$. Furthermore, by Eq.~\eqref{eq:app_sim_datacomp}, we have 
\begin{equation}
  \mathbb{P}(A = 1 \mid x, 1) \mathbb{P}(U = 1 \mid x)  +  \mathbb{P}(A = 1 \mid x, 0)  \mathbb{P}(U = 0 \mid x) = \pi(x) = \mathbb{P}_\mathrm{obs}(A=1 \mid x)
\end{equation}
so that $\mathbb{P}$ induces $\mathbb{P}_\mathrm{obs}(A=1 \mid x)$.

\textbf{Continuous treatment:} We simulate an observed confounder $X \sim \mathrm{Uniform}[-1, 1]$ and independently a binary unobserved confounder $U \sim \textrm{Bernoulli}(p=0.5)$. Then, we simulate a continuous treatment via
\begin{equation}
     A \mid X = x, U = u \sim \mathrm{Beta}(\alpha, \beta) \text{ with } \alpha = \beta = 2 + x + \gamma (u - 0.5),
\end{equation}
where $\gamma$ is a parameter that controls the strength of unobserved confounding. In our experiments, we chose $\gamma = 2$. Finally, we simulate an outcome via
\begin{equation}
    Y = A + X  \exp(-X A) - 0.5  (U - 0.5)  X + (0.5  X + 1) + \varepsilon,
\end{equation}
where $\varepsilon \sim \mathcal{N}(0, 1)$. Note that the data-generating process does \emph{not} follow a (continuous) MSM. 

\textbf{Oracle sensitivity parameters $\Gamma^\ast$:} We can obtain Oracle sensitivity parameters $\Gamma^\ast(x, a)$ for each sample with $X=x$ and $A=a$ by simulating from our previously defined data-generating process to estimate the densities $\mathbb{P}(u \mid x, a)$ and $\mathbb{P}(u \mid x)$ and subsequently solve for $\Gamma^\ast(x, a)$ in the GTSM equations from Lemma~\ref{lem:app_gtsm}. By definition, $\Gamma^\ast(x, a)$ is the smallest sensitivity parameter such that the corresponding sensitivity model is guaranteed to produce bounds that cover the ground-truth causal query. For binary treatments, it holds $\Gamma^\ast(x, a) = \Gamma^\ast$, i.e., the oracle sensitivity parameter does not depend on $x$ and $a$. For continuous treatments, we choose $\Gamma^\ast = \Gamma^\ast(a) = \int \Gamma^\ast(x, a) \mathbb{P}(x) \diff x$ in Fig.~\ref{fig:simulation_f}.

\subsection{Semi-synthetic data} \label{app:semi_synthetic_details}

We obtain covariates $X$ and treatments $A$ from MIMIC-III \citep{Johnson.2016} as described in the paragraph regarding real-world data below. Then we learn the observed propensity score $\hat{\pi}(x) = \hat{\mathbb{P}}(A = 1 \mid x)$ using a feed-forward neural network with three hidden layers and ReLU activation function. We simulate a uniform unobserved confounder $U \sim \mathcal{U}[0, 1]$. Then, we define a weight
\begin{align}\label{eq:app_semi_synthetic_weight}
    w(X, U) &= \mathbbm{1}\left\{\gamma \geq 2 - \frac{1}{\hat{\pi}(X)} \right\} \left( \gamma + 2 U (1 - \gamma)\right) + \\ & \quad \quad \mathbbm{1}\left\{\gamma < 2 -  \frac{1}{\hat{\pi}(X)} \right\} \left(2 - \frac{1}{\hat{\pi}(X)} +  2 U \left(\frac{1}{\hat{\pi}(X)} - 1\right)\right)
\end{align}
and simulate synthetic binary treatments via
\begin{equation}
A= 1 \mid X = x, U = u \sim \textrm{Bernoulli}\left(p=w(x,u) \hat{\pi}(x)  \right).   
\end{equation}
Here, $\gamma$ is a parameter controlling the strength of unobserved confounding, which we set to $0.25$. The data-generating process is constructed in a way such that the full propensity $\mathbb{P}(A= 1 \mid X = x, U = u)$ induces the (estimated) observed propensity $\hat{\pi}$ from the real-world data. We then simulate synthetic outcomes via
\begin{equation}
    Y = (2 A - 1) \left(\frac{1}{d_x + 1} \left(\left(\sum_{i=1}^{d_x} X_i \right) + U\right) \right) + \varepsilon,
\end{equation}
where $\varepsilon \sim \mathcal{N}(0, 0.1)$.
In our experiments, we use 90\% of the data for training and validation, and 10\% of the data for evaluating test performance. From our test set, we filter out all samples that satisfy either $\mathbb{P}(A=1 \mid x) < 0.3$  or $\mathbb{P}(A=1 \mid x) > 0.7$. This is because these samples are associated with large empirical uncertainty (low sample sizes). In our experiments, we only demonstrate the validity of our \frameworkname bounds in settings with low empirical uncertainty.

\textbf{Oracle sensitivity parameters $\Gamma^\ast$:} Similar to our fully synthetic experiments, we first obtain the Oracle sensitivity parameters $\Gamma^\ast(x, a)$ for each test sample with confounders $x$ and treatment $a$. We then take the median overall $\Gamma^\ast(x, a)$ of the test sample. By definition, \frameworkname should then cover at least 50\% of all test query values (see Table~\ref{tab:semi_synthetic}).  

\subsection{Real-world data} 

We use the MIMIC-III dataset \cite{Johnson.2016}, which includes electronic health records from patients admitted to intensive care units. We use a preprocessing pipeline \citep{Wang.2020} to extract patient trajectories with 8 hourly recorded patient characteristics (heart rate, sodium blood pressure, glucose, hematocrit, respiratory rate, age, gender) and a binary treatment indicator (mechanical ventilation). We then sample random time points for each patient trajectory and define the covariates $X \in \R^8$ as the past patient characteristics averaged over the previous 10 hours. Our treatment $A \in \{0, 1\}$ is an indicator of whether mechanical ventilation was done in the subsequent 10-hour time. Finally, we consider the final heart rate and blood pressure averaged over 5 hours as outcomes. After removing patients with missing values and outliers (defined by covariate values smaller than the corresponding 0.1th percentile or larger than the corresponding 99.9th percentile), we obtain a dataset of size $n=14719$ patients. We split the data into train (80\%), val (10\%), and test (10\%).

\clearpage

\section{Additional experiments}\label{app:more_exp}

\subsection{Additional treatment combinations for synthetic data}

Here, we provide results for additional treatment values $a \in \{0.1, 0.9\}$ for the synthetic experiments with continuous treatment in Sec.~\ref{sec:experiments}. Fig.~\ref{fig:app_treat} shows the results for our experiment where we compare the \frameworkname bounds under the MSM with (optimal) closed-form solutions. Fig.~\ref{fig:app_treat_f} shows the results of our experiment where we compare the bounds of different sensitivity models. The results are consistent with our observations from the main paper and show the validity of the bounds obtained by \frameworkname,

\begin{figure}[ht]
 \centering
\begin{subfigure}{0.45\textwidth}
  \centering
  \includegraphics[width=1\linewidth]{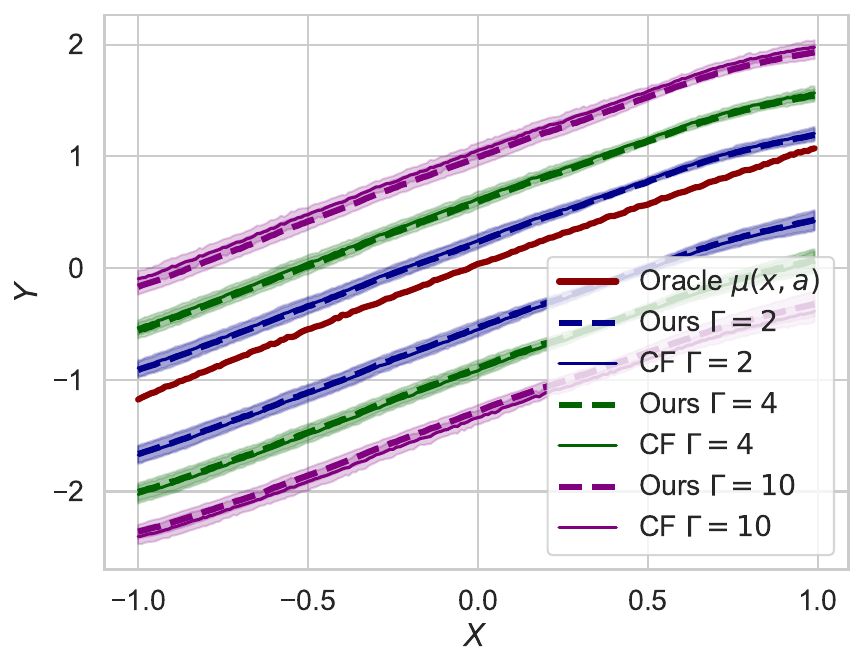}
\end{subfigure}%
\begin{subfigure}{0.45\textwidth}
  \centering
  \includegraphics[width=1\linewidth]{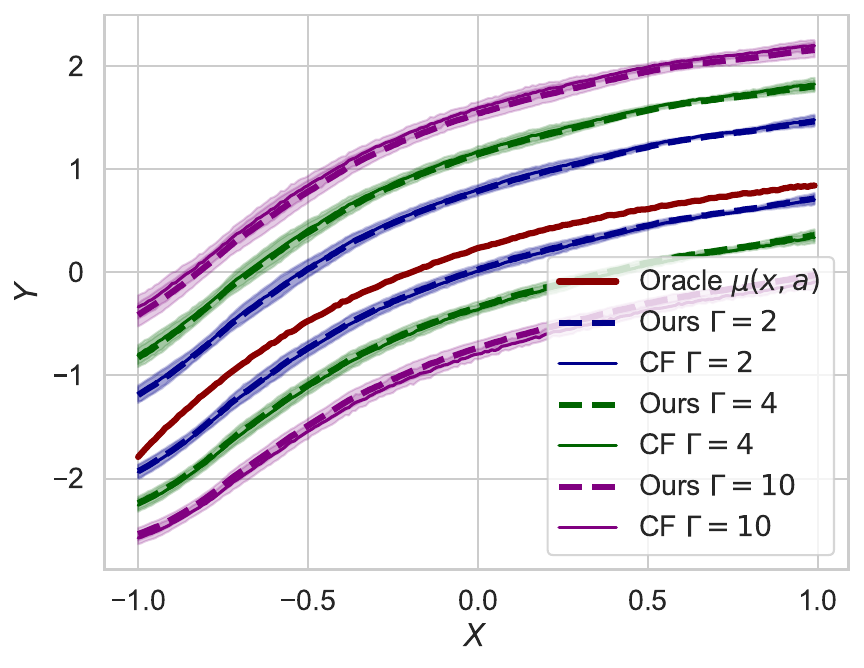}
\end{subfigure}
\caption{Validating the correctness of \frameworkname (ours) by comparing with optimal closed-form solutions (CF) for the MSM in the synthetic continuous treatment setting. \emph{Left:} $a=0.1$. \emph{Right:} $a=0.9$. Reported: mean $\pm$ standard deviation over 5 runs.}
\label{fig:app_treat}
\end{figure}

\begin{figure}[ht]
 \centering
\begin{subfigure}{0.45\textwidth}
  \centering
  \includegraphics[width=1\linewidth]{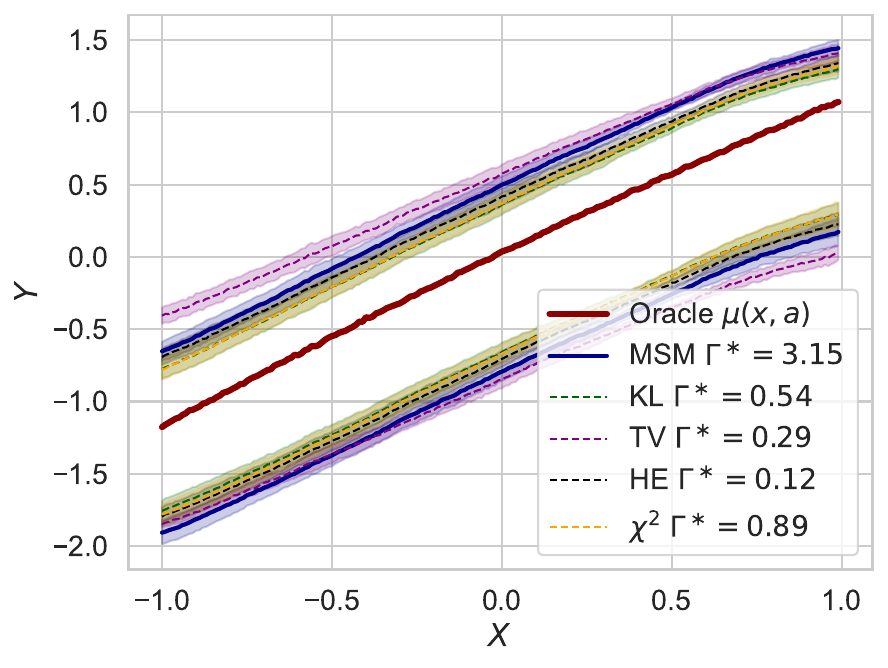}
\end{subfigure}%
\begin{subfigure}{0.45\textwidth}
  \centering
  \includegraphics[width=1\linewidth]{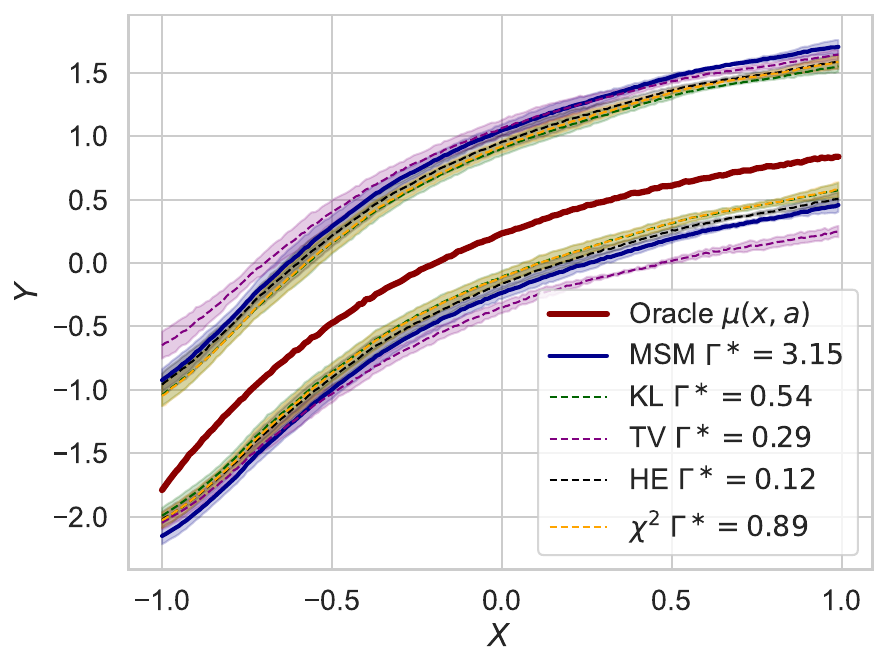}
\end{subfigure}
\caption{Confirming the validity of \frameworkname bounds for various sensitivity models in the synthetic continuous treatment setting. \emph{Left:} $a=0.1$. \emph{Right:} $a=0.9$. Reported: mean $\pm$ standard deviation over 5 runs.}
\label{fig:app_treat_f}
\end{figure}

\subsection{Densities for lower bounds on real-world data}

Here, we provide the distribution shifts for our real-world case study (Sec.~\ref{sec:experiments}), but for the lower bounds instead of the upper. The results are shown in Fig.~\ref{fig:app_real}. In contrast to the shifts for the upper bounds, increasing $\Gamma$ leads to a distribution shift \emph{away} from the direction of the danger area, i.e., high heart rate and blood pressure. 

\begin{figure}[ht]
 \centering
\begin{subfigure}{0.30\textwidth}
  \centering
  \includegraphics[width=1\linewidth]{figures/plot_real_stage1.pdf}
\end{subfigure}%
\begin{subfigure}{0.30\textwidth}
  \centering
  \includegraphics[width=1\linewidth]{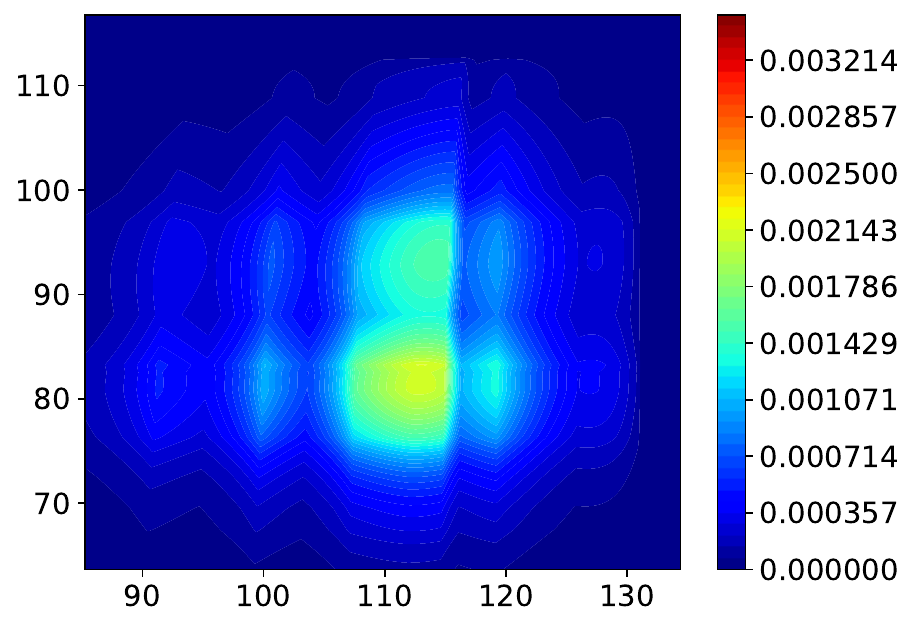}
\end{subfigure}
\begin{subfigure}{0.30\textwidth}
  \centering
  \includegraphics[width=1\linewidth]{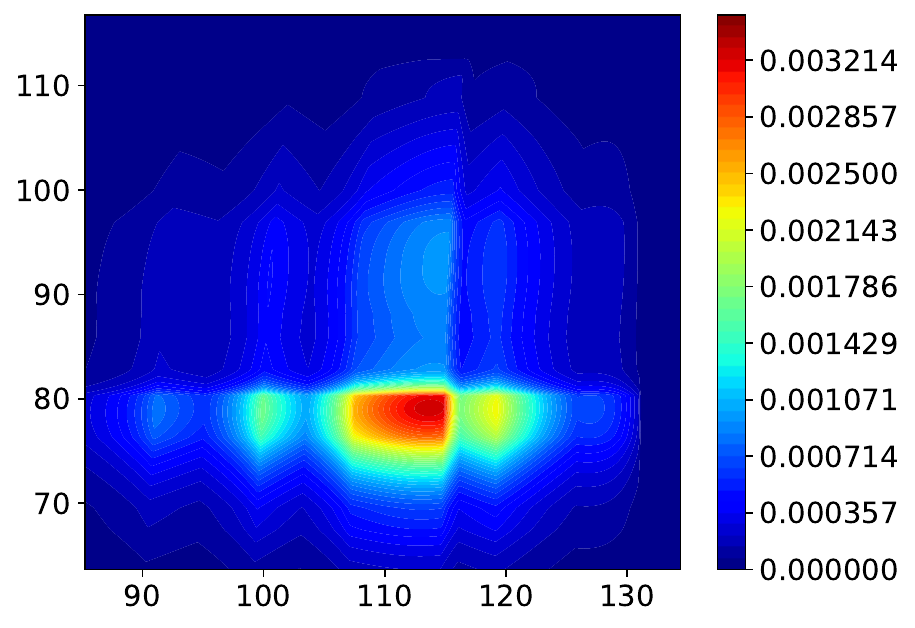}
\end{subfigure}
\caption{Contour plots of 2D densities obtained by \frameworkname under an MSM. Here, we aim to learn a lower bound of the causal query $\mathbb{P}(Y_1(1) \geq 115, Y_2(1) \geq 90 \mid X = x_0)$ for a test patient $x_0$. \emph{Left:} Stage 1/ observational distribution. \emph{Center:} Stage 2, $\Gamma = 2$. \emph{Right:} Stage 2, $\Gamma = 4$.}
\label{fig:app_real}
\vspace{-0.4cm}
\end{figure}

\subsection{Additional semi-synthetic experiment}

We provide additional experimental results using a semi-synthetic dataset based on the IHPD data \cite{Hill.2011}. IHDP is a randomized dataset with information on premature infants. It was originally designed to estimate the effect of home visits from specialist doctors on cognitive test scores. For our experiment, we extract $d_x = 7$ covariates $X$ (birthweight, child's head circumference, number of weeks pre-term that the child was born, birth order, neo-natal health index, the mother's age, and the sex of the child) of $n=985$ infants. Then, and define an observational propensity score as $\pi(x) = \mathbb{P}_\mathrm{obs}(A=1 \mid x) = 0.25 + 0.5 \, \sigma(\frac{3}{d_x} \sum_{i=1}^{d_x} X_i)$. Then, similar as in Appendix~\ref{app:semi_synthetic_details}, we introduce unobserved confounding by generating synthetic treatments via 
\begin{equation}
A= 1 \mid X = x, U = u \sim \textrm{Bernoulli}\left(p=w(x,u) \pi(x)  \right),
\end{equation}
where $w(x,u)$ is defined in Eq.~\eqref{eq:app_semi_synthetic_weight} and $U \sim \mathcal{U}[0, 1]$. We then generate synthetic outcomes via
\begin{equation}
    Y = (2 A - 1) \left(\frac{1}{d_x + 1} \left(\left(\sum_{i=1}^{d_x} X_i \right) + U\right) \right) + \varepsilon,
\end{equation}
where $\varepsilon \sim \mathcal{N}(0, 1)$.

In our experiments, we split the data into train (80\%) and test set (20\%). We verify the validity of our \frameworkname bounds for CATE analogous to our experiments using the MIMIC (Sec.~\ref{sec:experiments}): For each sensitivity model (MSM, TV, HE, RB), we obtain the smallest oracle sensitivity parameter $\Gamma^\ast$ that guarantees coverage (i.e., satisfies the respective sensitivity constraint) for 50\% of the test samples. Then, we plot the coverage and median interval length of the \frameworkname bounds over the test set. The results are in Table~\ref{tab:app_semi_synthetic}. The results confirm the validity of \frameworkname. 

\begin{table}[ht]
\centering
\caption{Results for IHDP-based semi-synthetic data.}
\label{tab:app_semi_synthetic}
\begin{tabular}{lcc}
\noalign{\smallskip} \toprule \noalign{\smallskip}
{Sensitivity model}& Coverage & Interval length\\
\midrule
MSM $\Gamma^\ast = 3.25$ & $0.92 \pm 0.03$ & $1.63 \pm 0.05$ \\
TV $\Gamma^\ast = 0.31$ & $0.71 \pm 0.28$ & $1.23 \pm 0.54$ \\
HE $\Gamma^\ast = 0.11$ & $0.70 \pm 0.14$ & $1.10 \pm 0.21$ \\
RB $\Gamma^\ast = 9.62$ & $0.56 \pm 0.15$ & $0.95 \pm 0.29$ \\
\bottomrule
\multicolumn{3}{l}{Reported: mean $\pm$ standard deviation ($5$ runs).}
\end{tabular}
\end{table}

\clearpage

\section{Discussion on limitations and future work}\label{app:discussion}

\textbf{Limitations:} \frameworkname is a versatile framework that can approximate the bounds of causal effects in various settings. However, there are a few settings where (optimal) closed-form bounds exist (e.g., CATE for binary treatments under the MSM), which should be preferred when available. Instead, \frameworkname offers a unified framework for causal sensitivity analysis under various sensitivity models, treatment types, and causal queries, and can be applied in settings where closed-form solutions have not been derived or do not exist (Table~\ref{tab:rw}).

\textbf{Future work:} Our research hints at the broad applicability of \frameworkname beyond the three sensitivity models that we discussed above (see also Appendix~\ref{app:sensitivity}). For future work, it would be interesting to conduct a comprehensive comparison of sensitivity models and provide practical recommendations for their usage. Future work may further consider incorporating techniques from semiparametric statistical theory to obtain estimation guarantees, robustness properties, and confidence intervals. Finally, we only provided identifiability results that hold in the limit of infinite data. It would be interesting to provide rigorous empirical uncertainty quantification for \frameworkname, e.g., via a Bayesian approach. While in principle the bootstrapping approach from \citep{Jesson.2022} could be applied in our setting, this could be computationally infeasible for large datasets.

\end{document}